\documentclass[nohyperref]{article}

\usepackage[accepted]{icml2022}

\usepackage[textsize=tiny]{todonotes}

\usepackage[utf8]{inputenc} 
\usepackage[T1]{fontenc}    
\usepackage{hyperref}       
\usepackage{url}            
\usepackage{booktabs}       
\usepackage{amsfonts}       
\usepackage{nicefrac}       
\usepackage{microtype}      
\usepackage{xcolor}         

\usepackage{microtype}
\usepackage{graphicx}
\usepackage{booktabs} 

\usepackage{microtype}
\usepackage{graphicx}
\usepackage{booktabs} 

\usepackage{natbib}
\usepackage{amssymb, amsmath,amsthm}
\usepackage{amsfonts}
\usepackage{algorithm}
\usepackage{algorithmic}
\usepackage{paralist}
\usepackage{multirow}
\usepackage{times}
\usepackage{wrapfig}

\usepackage{url,enumerate}
\usepackage{color,xcolor}
\usepackage{makeidx}  
\usepackage{amsmath,amssymb}
\usepackage{mathtools}
\usepackage[small, compact]{titlesec}
\usepackage{xspace}
\usepackage{epstopdf}
\usepackage{mathrsfs}
\usepackage{times}
\usepackage{enumerate}
\usepackage{color}
\usepackage{graphicx,epsfig}
\usepackage{amsmath,amssymb,xspace}
\usepackage{url}
\usepackage{cleveref}

\usepackage{bm}
\usepackage{bbm}
\usepackage{upgreek}

\usepackage{multirow}
\usepackage{ulem}
\usepackage{cancel}
\usepackage{subcaption}
\usepackage{dsfont}
\usepackage{float}

\newtheorem{theorem}{Theorem}[section]

\newtheorem{lem}[theorem]{Lemma}
\newcommand{\ours}{NESH\xspace}
\newcommand{\gap}{$\Gamma$P\xspace}
\newcommand{\gaps}{$\Gamma$Ps\xspace}
\newcommand{\ggaps}{G$\Gamma$Ps\xspace}
\newcommand{\eg}{{\textit{e.g.},}\xspace}
\newcommand{\ie}{{\textit{i.e.},}\xspace}
\newcommand{\etc}{{\textit{etc}.}\xspace}

\icmltitlerunning{Nonparametric Embeddings of Sparse High-Order Interaction Events}

\begin{document}

\twocolumn[
\icmltitle{Nonparametric Embeddings of Sparse High-Order Interaction Events}

\icmlsetsymbol{equal}{*}

\begin{icmlauthorlist}
\icmlauthor{Zheng Wang}{equal,theU}
\icmlauthor{Yiming Xu}{equal,theUm}
\icmlauthor{Conor Tillinghast}{theUm}
\icmlauthor{Shibo Li}{theU}
\icmlauthor{Akil Narayan}{theUm,SciU}
\icmlauthor{Shandian Zhe}{theU}
\end{icmlauthorlist}

\icmlaffiliation{theU}{School of Computing, University of Utah}
\icmlaffiliation{theUm}{Department of Mathematics, University of Utah}
\icmlaffiliation{SciU}{Scientific Computing and Imaging (SCI) Institute, University of Utah}

\icmlcorrespondingauthor{Shandian Zhe}{zhe@cs.utah.edu}

\icmlkeywords{Machine Learning, ICML}
\vskip 0.3in
]


\printAffiliationsAndNotice{\icmlEqualContribution} 
\newcommand{\var}{{\rm var}}
\newcommand{\Tr}{^{\rm T}}
\newcommand{\vtrans}[2]{{#1}^{(#2)}}
\newcommand{\kron}{\otimes}
\newcommand{\schur}[2]{({#1} | {#2})}
\newcommand{\schurdet}[2]{\left| ({#1} | {#2}) \right|}
\newcommand{\had}{\circ}
\newcommand{\diag}{{\rm diag}}
\newcommand{\invdiag}{\diag^{-1}}
\newcommand{\rank}{{\rm rank}}
 \newcommand{\expt}[1]{\langle #1 \rangle}
\newcommand{\nullsp}{{\rm null}}
\newcommand{\tr}{{\rm tr}}
\renewcommand{\vec}{{\rm vec}}
\newcommand{\vech}{{\rm vech}}
\renewcommand{\det}[1]{\left| #1 \right|}
\newcommand{\pdet}[1]{\left| #1 \right|_{+}}
\newcommand{\pinv}[1]{#1^{+}}
\newcommand{\erf}{{\rm erf}}
\newcommand{\hypergeom}[2]{{}_{#1}F_{#2}}
\newcommand{\mcal}[1]{\mathcal{#1}}
\newcommand{\bepsilon}{\boldsymbol{\epsilon}}
\newcommand{\brho}{\boldsymbol{\rho}}
\renewcommand{\a}{{\bf a}}
\renewcommand{\b}{{\bf b}}
\renewcommand{\c}{{\bf c}}
\renewcommand{\d}{{\rm d}}  
\newcommand{\e}{{\bf e}}
\newcommand{\f}{{\bf f}}
\newcommand{\g}{{\bf g}}
\newcommand{\h}{{\bf h}}
\newcommand{\bi}{{\bf i}}
\newcommand{\bj}{{\bf j}} 

\renewcommand{\k}{{\bf k}}
\newcommand{\m}{{\bf m}}
\newcommand{\mhat}{{\overline{m}}}
\newcommand{\tm}{{\tilde{m}}}
\newcommand{\n}{{\bf n}}
\renewcommand{\o}{{\bf o}}
\newcommand{\p}{{\bf p}}
\newcommand{\q}{{\bf q}}
\renewcommand{\r}{{\bf r}}
\newcommand{\s}{{\bf s}}
\renewcommand{\t}{{\bf t}}
\renewcommand{\u}{{\bf u}}
\renewcommand{\v}{{\bf v}}
\newcommand{\w}{{\bf w}}
\newcommand{\x}{{\bf x}}
\newcommand{\y}{{\bf y}}
\newcommand{\z}{{\bf z}}
\newcommand{\A}{{\bf A}}
\newcommand{\B}{{\bf B}}
\newcommand{\C}{{\bf C}}
\newcommand{\D}{{\bf D}}
\newcommand{\E}{{\bf E}}
\newcommand{\F}{{\bf F}}
\newcommand{\G}{{\bf G}}
\newcommand{\Gcal}{{\mathcal{G}}}
\newcommand{\Dcal}{\mathcal{D}}
\renewcommand{\H}{{\bf H}}
\newcommand{\I}{{\bf I}}
\newcommand{\J}{{\bf J}}
\newcommand{\K}{{\bf K}}
\renewcommand{\L}{{\bf L}}
\newcommand{\Lcal}{{\mathcal{L}}}
\newcommand{\M}{{\bf M}}
\newcommand{\Mcal}{{\mathcal{M}}}
\newcommand{\Ocal}{{\mathcal{O}}}
\newcommand{\Fcal}{{\mathcal{F}}}
\newcommand{\Acal}{{\mathcal{A}}}
\newcommand{\Bcal}{{\mathcal{B}}}
\newcommand{\Ccal}{{\mathcal{C}}}
\newcommand{\Ecal}{{\mathcal{E}}}
\newcommand{\N}{\mathcal{N}}  
\newcommand{\bupeta}{\boldsymbol{\upeta}}
\renewcommand{\O}{{\bf O}}
\renewcommand{\P}{{\bf P}}
\newcommand{\Q}{{\bf Q}}
\newcommand{\R}{{\bf R}}
\renewcommand{\S}{{\bf S}}
\newcommand{\Scal}{{\mathcal{S}}}
\newcommand{\T}{{\bf T}}
\newcommand{\Tcal}{{\mathcal{T}}}
\newcommand{\U}{{\bf U}}
\newcommand{\Ucal}{{\mathcal{U}}}
\newcommand{\tU}{{\tilde{\U}}}
\newcommand{\tUcal}{{\tilde{\Ucal}}}
\newcommand{\V}{{\bf V}}
\newcommand{\W}{{\bf W}}
\newcommand{\Wcal}{{\mathcal{W}}}
\newcommand{\Vcal}{{\mathcal{V}}}
\newcommand{\X}{{\bf X}}
\newcommand{\KL}{\text{KL}}
\newcommand{\Xcal}{{\mathcal{X}}}
\newcommand{\Y}{{\bf Y}}
\newcommand{\Ycal}{{\mathcal{Y}}}
\newcommand{\Z}{{\bf Z}}
\newcommand{\Zcal}{{\mathcal{Z}}}
\newcommand{\hbeta}{\widehat{\bbeta}}
\newcommand{\homega}{\widehat{\bomega}}

\newcommand{\bfLambda}{\boldsymbol{\Lambda}}

\newcommand{\bsigma}{\boldsymbol{\sigma}}
\newcommand{\balpha}{\boldsymbol{\alpha}}
\newcommand{\bpsi}{\boldsymbol{\psi}}
\newcommand{\bphi}{\boldsymbol{\phi}}
\newcommand{\bPhi}{\boldsymbol{\Phi}}
\newcommand{\bbeta}{\boldsymbol{\beta}}
\newcommand{\boldeta}{\boldsymbol{\eta}}
\newcommand{\bnu}{\boldsymbol{\nu}}
\newcommand{\btau}{\boldsymbol{\tau}}
\newcommand{\bvarphi}{\boldsymbol{\varphi}}
\newcommand{\bzeta}{\boldsymbol{\zeta}}

\newcommand{\blambda}{\boldsymbol{\lambda}}
\newcommand{\bLambda}{\mathbf{\Lambda}}

\newcommand{\btheta}{\boldsymbol{\theta}}
\newcommand{\bpi}{\boldsymbol{\pi}}
\newcommand{\bxi}{\boldsymbol{\xi}}
\newcommand{\bSigma}{\boldsymbol{\Sigma}}
\newcommand{\bPi}{\boldsymbol{\Pi}}
\newcommand{\bOmega}{\boldsymbol{\Omega}}
\newcommand{\bomega}{\boldsymbol{\omega}}

\newcommand{\bx}{{\bf x}}
\newcommand{\bgamma}{\boldsymbol{\gamma}}
\newcommand{\bGamma}{\boldsymbol{\Gamma}}
\newcommand{\bUpsilon}{\boldsymbol{\Upsilon}}
\newcommand{\talpha}{\widetilde{\alpha}}
\newcommand{\tbeta}{{\widetilde{\bbeta}}}
\newcommand{\ttbeta}{{\widetilde{\beta}}}
\newcommand{\tomega}{{\widetilde{\bomega}}}
\newcommand{\ttomega}{{\widetilde{\omega}}}
\newcommand{\tlam}{{\tilde{\lambda}}}
\newcommand{\tblam}{{\tilde{\blambda}}}
\newcommand{\hM}{{\widehat{M}}}
\newcommand{\hA}{{\widehat{A}}}
\newcommand{\MN}{{\mathcal{M}\N}}

\newcommand{\bmu}{\boldsymbol{\mu}}
\newcommand{\1}{{\bf 1}}
\newcommand{\0}{{\bf 0}}

\newcommand{\bs}{\backslash}
\newcommand{\ben}{\begin{enumerate}}
\newcommand{\een}{\end{enumerate}}
 \newcommand{\notS}{{\backslash S}}
 \newcommand{\nots}{{\backslash s}}
 \newcommand{\noti}{{\backslash i}}
 \newcommand{\notj}{{\backslash j}}
 \newcommand{\nott}{\backslash t}
 \newcommand{\notone}{{\backslash 1}}
 \newcommand{\nottp}{\backslash t+1}

\newcommand{\notk}{{^{\backslash k}}}
\newcommand{\notij}{{^{\backslash i,j}}}
\newcommand{\notg}{{^{\backslash g}}}
\newcommand{\wnoti}{{_{\w}^{\backslash i}}}
\newcommand{\wnotg}{{_{\w}^{\backslash g}}}
\newcommand{\vnotij}{{_{\v}^{\backslash i,j}}}
\newcommand{\vnotg}{{_{\v}^{\backslash g}}}
\newcommand{\half}{\frac{1}{2}}
\newcommand{\msgb}{m_{t \leftarrow t+1}}
\newcommand{\msgf}{m_{t \rightarrow t+1}}
\newcommand{\msgfp}{m_{t-1 \rightarrow t}}

\newcommand{\proj}[1]{{\rm proj}\negmedspace\left[#1\right]}
\newcommand{\argmin}{\operatornamewithlimits{argmin}}
\newcommand{\argmax}{\operatornamewithlimits{argmax}}

\newcommand{\dif}{\mathrm{d}}
\newcommand{\abs}[1]{\lvert#1\rvert}
\newcommand{\norm}[1]{\lVert#1\rVert}

\newcommand{\mrm}[1]{\mathrm{{#1}}}
\newcommand{\RomanCap}[1]{\MakeUppercase{\romannumeral #1}}
\newcommand{\EE}{\mathbb{E}}
\newcommand{\cmt}[1]{}

\begin{abstract} 
High-order  interaction events are common in real-world applications. Learning embeddings that encode the complex relationships of the participants from these events is of great importance in knowledge mining and predictive tasks. Despite the success of existing approaches, \eg Poisson tensor factorization, they ignore the sparse structure underlying the data, namely the occurred interactions are far less than the possible interactions among all the participants. In this paper, we propose Nonparametric Embeddings of Sparse High-order interaction events (\ours). We hybridize a sparse hypergraph (tensor) process and a matrix Gaussian process to capture both the asymptotic structural sparsity within the interactions and nonlinear temporal relationships between the participants. We prove strong asymptotic bounds (including both a lower and an upper bound) of the sparsity ratio, which reveals the asymptotic properties of the sampled structure. We use batch-normalization, stick-breaking construction and sparse variational GP approximations to develop an efficient, scalable model inference algorithm. We demonstrate the advantage of our approach in several real-world applications.
\end{abstract}
		
\section{Introduction}
Many real-world applications are filled with interaction events between multiple entities or objects, \eg the purchases happened among \textit{customers}, \textit{products} and \textit{shopping pages} at 
\texttt{Amazon.com}, and tweeting between \textit{twitter users} and \textit{messages}. Embedding these events, namely, learning a representation of the participant objects to encode their complex relationships, is of great importance and interest, in discovering hidden patterns from data, \eg clusters and outliers, and performing downstream tasks, such as recommendation and online advertising. 

While Poisson tensor factorization is a popular framework for the representation learning of those events, current methods, \eg~\citep{chi2012tensors, HaPlKo15, Hu2015CountTensor,schein2015bayesian, Schein:2016:BPT:3045390.3045686,schein2019poisson}, are mostly based on a multilinear factorization form, \eg~\citep{Tucker66,Harshman70parafac}, and therefore might be inadequate to estimate complex, nonlinear temporal relationships in data.
More important,  existing methods overlook the structural sparsity underlying these events. That is, the observed interactions are far less than all the possible interactions among the participants (\eg 0.01\%). Many factorization models rely on tensor algebras and demand all the tensor entries (\ie interactions) should be observed~\citep{Kolda09TensorReview,kang2012gigatensor,choi2014dfacto}. Even for those entry-wise factorization models ~\citep{RaiDunson2014,zhao2015bayesian,du2018probabilistic}, from the Bayesian viewpoint, they are equivalent to first generating the entire tensor and then marginalizing out the unobserved entries. In practice, however, the observed interactions are often very sparse, and their proportion can get even smaller with the increase of objects. For example, in online shopping,  with the growth of users and items, the number of actual purchases (while growing) takes a smaller percentage of all possible purchases, i.e., all (user, item, shopping-page) combinations, because the latter grows much faster.

In this paper, we propose \ours, a novel nonparametric Poisson factorization approach for  high-order interaction events embedding. Not only does \ours flexibly estimate various nonlinear temporal  relationships of the participants, it also can capture structural sparsity  within the present interactions, absorbing both the structural traits and hidden relationships into the embeddings. Our major contributions are the following:
\begin{itemize}
	\item Model. We hybridize the recent sparse tensor (hypergraph) processes (STP)~\citep{tillinghast2021nonparametric} and  matrix Gaussian processes (MGP) to develop a sparse event model, where the embeddings are in charge of  both generating the interactions and modulating the event rates, hence can jointly encode the temporal relationships and sparse structure knowledge.
	\item Theory. We use Poisson tail estimate, Bernstein’s inequality and L'H\^opital's rule to prove strong asymptotic bounds of the sparsity ratio, including both a lower and upper bound. The prior work \citet{tillinghast2021nonparametric} only shows the sparsity ratio of the sampled tensors asymptotically converges to zero, yet never gives an estimate of the convergence rate. Our new result reveals more theoretical insight of  STP in producing sparse structures, and can also characterize the classical sparse graph generation models~\citep{caron2014sparse,williamson2016nonparametric}.
	\item Algorithm. We use the stick-breaking construction of the normalized hypergraph process to compute the embedding prior, and then use batch-normalization and variational sparse GP framework to develop an efficient and scalable model estimation algorithm.
\end{itemize}

 For evaluation, we conducted simulations to demonstrate that our theoretical bounds can indeed match the actual sparsity ratio and capture the asymptotic trend. Hence they can provide a reasonable convergence rate estimate and characterize the behavior of the prior. We then tested our approach \ours  on three real-world datasets. \ours achieves much better predictive performance than the existing methods  that  use Poisson tensor factorization, additional time steps, local time dependency windows and triggering kernels. \ours also outperforms the same model with the sparse hypergraph prior removed, which demonstrates the importance of accounting for the structure sparsity. 
 We then looked into the embeddings estimated by \ours, and found interesting patterns, including the clusters of users, sellers, and item categories in online shopping, and groups of states where car crash accidents happened. 
\section{Background}
We assume that we observed $K$-way interactions  among $K$ types of objects or entities (\eg \textit{customers}, \textit{products} and \textit{sellers}). We denote by $D_k$ the number of objects of type $k$, and index each object by $i_k$ ($1 \le i_k \le D_k$). We then index a particular interaction by a tuple $\bi = (i_1, \ldots, i_K )$.  We may observe multiple occurrences of a particular interaction. We denote the sequence of these events by $\s_{\bi} = [s_{\bi1}, \ldots, s_{\bi m_{\bi}}]$ where $s_{\bi j}$ is the time-stamp when $j$-th event occurred ($1 \le j \le m_\bi$) and $m_{\bi}$ is the total number of the occurrences of $\bi$. Suppose we have observed events of a collection of interactions, $\Scal = \{\s_{\bi_1}, \ldots, \s_{\bi_N}\}$, we aim to learn an embedding for each participant object. Note that one object may participate in multiple, distinct interactions. We denote by $\u^k_j$ the embeddings for object $j$ of type $k$, which is an $R$ dimensional vector. We stack the embeddings of the objects of type $k$ into a embedding matrix $\U^k = [\u^k_1, \ldots, \u^k_{D_k}]^\top$, and denote by $\Ucal = \{\U^1, \ldots, \U^K\}$ all the embedding matrices. 

To estimate the embeddings from $\Scal$, a popular approach is tensor factorization. We can introduce a $K$-mode tensor $\Ycal \in  \mathbb{R}^{D_1 \times \ldots \times D_K}$ accordingly, where each mode $k$ includes $D_k$ objects, and each entry $\bi$ corresponds to an event sequence $\s_{\bi}$. For event modeling, we can use the popular (homogeneous) Poisson processes, and the probability of $\s_{\bi}$ is given by 
\begin{align}
	p(s_{\bi}|\lambda_{\bi}) = e^{-\int_0^T \lambda_{\bi} \d t} \prod\nolimits_{j=1}^{m_{\bi}} \lambda_{\bi} = e^{-T\lambda_{\bi}} \lambda_{\bi}^{m_\bi}, \label{eq:hpp}
\end{align} 
where $T$ is the total time span of all the event sequences, and $\lambda_{\bi} > 0$ is the rate (or intensity) of the interaction $\bi$. Since the probability is only determined by the event count, we can place the count value $m_\bi$ in the entry $\bi$ of $\Ycal$, and perform count tensor factorization.  Classical tensor factorization approaches include Tucker decomposition~\citep{Tucker66}, CANDECOMP/PARAFAC (CP) decomposition~\citep{Harshman70parafac}, \etc  Tucker decomposition assumes $\Ycal = \Wcal \times_1 \U^{1} \times_2 \ldots \times_K \U^{K}$, where  $\mathcal{W} \in \mathbb{R}^{r_1 \times \ldots \times r_K}$ is a parametric core tensor,  $\{\U^k|1\le k \le K\}$ are embedding matrices, and $\times_k$ is  the tensor-matrix product at mode $k$~\citep{kolda2006multilinear}, which is very similar to the matrix-matrix product. If we set all $r_k = R$, and constrain $\Wcal$ to be diagonal, Tucker decomposition is reduced to  CANDECOMP/PARAFAC (CP) decomposition~\citep{Harshman70parafac}.  While numerous tensor factorization algorithms have  been developed, \eg  ~\citep{Chu09ptucker,kang2012gigatensor,choi2014dfacto}, most of them inherit the CP or Tucker form. 
To perform count tensor factorization, we can use Poisson process likelihood \eqref{eq:hpp} for each entry, and apply the Tucker/CP decomposition to the rates $\{\lambda_{\bi}\}$ or log rates  $\{\log(\lambda_{\bi})\}$ ~\citep{chi2012tensors, Hu2015CountTensor}.   A more refined strategy is to further partition the events  into a series of time steps, \eg by weeks or months,  augment the count tensor with a time-step mode~\citep{xiong2010temporal, schein2015bayesian, Schein:2016:BPT:3045390.3045686,schein2019poisson}, and jointly estimate the embeddings of these steps, $\{\s_1, \s_2, \ldots\}$. We can also model the dependencies between the time steps with some dynamics, \eg ~\citep{xiong2010temporal}.


\section{Model}

Despite the success of existing Poisson tensor factorization approaches, they might be restricted in that (1) the commonly used CP/Tucker factorization over the rates are multilinear to the embeddings and therefore cannot capture more complex, nonlinear relationships between the interaction participants; (2) the homogeneous assumption, \ie constant event rate, might be oversimplified, overlook temporal variations of the rates, and hence miss critical temporal patterns. More important, (3) in many real-world applications, the present interactions are very sparse, when contrasted to all possible interactions.  For example,  despite the massive online transactions, the ratio between the number of actual transactions and \textit{all} possible transactions (\ie all combinations of (\textit{customer, product, seller}) is tiny\footnote{see  Amazon data samples (\url{http://jmcauley.ucsd.edu/data/amazon/}) and dataset information in our experiments in Sec. \ref{sect:exp}.  }, slightly above zero (\eg $0.01\%$). This proportion can get even smaller with the growth of customers, products and sellers, because their combinations can grow much faster.  Similar observations can be found in clicks in online advertising, message tweeting, \etc  Existing methods, however, are not aware of this data sparsity, and lack an effective modeling framework to embed the underlying sparse structures. To overcome these limitations, we propose \ours, a novel  nonparametric embedding model for sparse high-order interaction events, presented as follows. 
\subsection{Nonparametric Sparse Event Modeling for High-Order Interactions}
First, to highlight the sparse structure within the observed events, we view the participants as nodes, and their interactions as $K$-way hyperedges in a hypergraph. Each edge connects $K$ participants (nodes), corresponding to a particular interaction $\bi$. Attached to $\bi$ is a sequence of events $\s_\bi$ --- the occurrence history of $\bi$.  Our goal is to learn an embedding for each node, which is able to not only estimate the complex temporal relationships between the nodes, but also capture the traits of the sparse hypergraph structure. To this end, we follow~\citep{tillinghast2021nonparametric,caron2014sparse} to construct a stochastic process to sample the hypergraph, with a guarantee of sparsity in the asymptotic sense. Specifically, for each node type  $k (1\le k \le K)$, we sample a set of Gamma processes~\citep{hougaard1986survival,brix1999generalized} to represent an infinite number of nodes  and their weights, 
\begin{align}
	W^\alpha_{k,r} \sim \Gamma\text{P}(\beta_\alpha) \;\; (1 \le k \le K, 1 \le r \le R) \label{eq:gaps}
\end{align}
where $\beta_\alpha$ is a Lebesgue base measure confined to $[0, \alpha] (\alpha > 0)$.  Next, we use these \gaps to construct a product-measure sum, with which as the mean measure to sample a Poisson point process  (PPP)~\citep{kingman1992poisson}, which represents the sampled edges of the hypergraph, 
\begin{align}
	&M = \sum\nolimits_{r=1}^R W^\alpha_{1,r} \times \ldots \times W^\alpha_{K,r}, \,\, \notag \\ &T | \{W^\alpha_{k, r}\}_{1 \le k \le K, 1 \le r \le R} \sim \text{PPP}(M). \label{eq:ppp}
\end{align}  
Accordingly, $T$ has the following form, 
 \[
 T = \sum_{\bi \in \Ecal} c_{\bi} \cdot \delta_{\Theta^\alpha_\bi},
 \]
 where each point represents an hyperedge (interaction), $\Ecal$ is the set of all the sampled points, $c_{\bi}>0$ is the count of the point $\bi$, and $\Theta^\alpha_{\bi} = \{(\theta^\alpha_{1i_1}, \ldots, \theta^\alpha_{Ki_K})\}$ represents the location of that point and comes from the \gaps, and $\delta_{[\cdot]}$ is the Dirac measure.   In essence, the \gaps sample infinite nodes for each type $k (1\le k \le K)$, and then the PPP picks the nodes from each type to sample the hyperedges, \ie multiway interactions. 
 \cmt{
 \begin{lem}
 	For any fixed $\alpha>0$, the number of sampled hyperedges $N^\alpha=|\Ecal|$ via \eqref{eq:gaps}\eqref{eq:ppp} is finite almost surely (\ie with probability one). When $\alpha \rightarrow \infty$, $N^\alpha \rightarrow \infty$ a.s.
 	\label{lem:1}
 \end{lem}
 To examine the sparsity, we are interested in the nodes connected by the sampled hyperedges $\Ecal$, which we refer to as ``active'' nodes. They are participants of the observed interactions.  For example, if an edge $2-3-1$ is sampled, then node 2, 3, 1 (of type 1, 2, 3 respectively) are active nodes. Denote by $D_k^\alpha$ the number of distinct active nodes of type $k$.  If we connect all the active nodes of the $K$ types, we will have  $\prod_{k=1}^K D_k^\alpha$ hyperedges (interactions). Intuitively, sparsity means the proportion of the sampled edges in all possible edges is very small, and the former grows slower than the latter, with the increase of active nodes. This is guaranteed by 
}

To examine the sparsity, we look into the nodes in the sampled hyperedges $\Ecal$, which are referred to as ``active'' nodes. They are participants of the interactions.  For example, if an edge $2-3-1$ is sampled, then node 2, 3, 1 (of type 1, 2, 3 respectively) are active nodes. Denote by $D_k^\alpha$ the number of distinct active nodes of type $k$.  If we connect all the active nodes of the $K$ types, we will have  $\prod_{k=1}^K D_k^\alpha$ hyperedges (interactions) in total, \ie the volume. Sparsity means the proportion of the sampled edges in all possible edges is very small, and the former grows slower than the latter, with the increase of active nodes. Denote by $N^\alpha$ the number of sampled edges. The sparsity is guaranteed by
 \begin{lem}[Corollary 3.1.1 \citep{tillinghast2021nonparametric}]
 	  $N^\alpha = o(\prod _{k=1}^K D^\alpha_k)$ almost surely as $\alpha \rightarrow \infty$, \ie  
 	$\underset{\alpha \rightarrow \infty}{\lim } \frac{N^\alpha}{\prod _{k=1}^K D^\alpha_k} = 0 \;\;\;a.s. $
 	\label{lem:2}
 \end{lem}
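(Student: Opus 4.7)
The plan is to prove the lemma by separately estimating the growth rates of the numerator and the denominator, and showing that the former is strictly smaller asymptotically. Specifically, I would aim to show that $N^\alpha = O(\alpha^K)$ almost surely, while each $D^\alpha_k$ grows at least on the order of $\alpha \log \alpha$, so that $\prod_{k=1}^K D^\alpha_k$ outpaces $N^\alpha$ by a factor of $(\log \alpha)^{-K}$, which vanishes.

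For the upper bound on $N^\alpha$, I would use the fact that each $W^\alpha_{k,r}$ has total mass $W^\alpha_{k,r}([0,\alpha]) \sim \mathrm{Gamma}(\alpha,1)$, so by the strong law $\|W^\alpha_{k,r}\|/\alpha \to 1$ a.s. as $\alpha\to\infty$. Consequently, the total mass of the mean measure satisfies $\|M\|=\sum_{r=1}^R \prod_{k=1}^K \|W^\alpha_{k,r}\| \sim R\alpha^K$ a.s. Since $N^\alpha$ is bounded by the total number of points produced by the PPP, which is Poisson with mean $\|M\|$, a Chernoff/Bernstein estimate for Poisson variables (applied along an integer sequence of $\alpha$ and combined with monotonicity) yields $N^\alpha = O(\alpha^K)$ a.s.

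The heart of the argument is the lower bound on $D^\alpha_k$. Conditional on $\{W^\alpha_{k',r'}\}$, an atom of $W^\alpha_{k,r}$ at $\theta$ with weight $w$ is active iff the PPP produces at least one point whose $k$-th coordinate is $\theta$; this occurs with probability $1-\exp\bigl(-w\prod_{k'\neq k}\|W^\alpha_{k',r}\|\bigr) \ge 1-\exp(-cw\alpha^{K-1})$ for large $\alpha$ by the previous step. Fixing $\epsilon>0$, every atom with weight above $\alpha^{-(K-1)+\epsilon}$ is therefore active with probability $1-o(1)$. By the Lévy intensity of the gamma process, the number of such atoms is Poisson with mean $\alpha \int_{\alpha^{-(K-1)+\epsilon}}^{\infty} w^{-1}e^{-w}\,dw \sim (K-1-\epsilon)\alpha\log\alpha$. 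A second Poisson tail estimate and Borel--Cantelli along an integer sequence of $\alpha$ then promote this to the almost-sure bound $D^\alpha_k \gtrsim \alpha\log\alpha$.

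Combining the two bounds gives $N^\alpha/\prod_{k=1}^K D^\alpha_k = O((\log\alpha)^{-K}) \to 0$ almost surely, as desired. The main obstacle I anticipate is not the expectation-level calculations but the passage to an almost-sure statement \emph{uniformly} in large $\alpha$: the underlying randomness (gamma-process atoms, the conditional PPP, and the ``hit'' indicators) depends on $\alpha$, and controlling all three simultaneously requires a monotone coupling, viewing $W^\alpha$ as the restriction of a single gamma process on $[0,\infty)$ to $[0,\alpha]$ (and the PPP analogously as a marking of a single background process). With this coupling, the subsequence bounds obtained above can be interpolated to continuous $\alpha$, closing the proof.
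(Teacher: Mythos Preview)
The paper does not give its own proof of this lemma; it is quoted verbatim as Corollary~3.1.1 of \citet{tillinghast2021nonparametric}. What the paper \emph{does} prove (in the appendix) is the quantitative Lemma~3.2, which gives explicit high-probability upper and lower bounds on $N^\alpha/\prod_k D^\alpha_k$ and hence implies the present statement along integer $\alpha$ via Borel--Cantelli. So the natural comparison is between your outline and the appendix proof of Lemma~3.2.

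Your approach is correct and is essentially the same skeleton the appendix uses: bound $N^\alpha$ above by the total PPP count $|T|$, control $|T|$ via a Poisson tail estimate together with the fact that each $W_{k,r}^\alpha([0,\alpha])\sim\Gamma(\alpha,1)$ concentrates near $\alpha$, and then show each $D^\alpha_k$ is of order $\alpha\log\alpha$. The one genuine methodological difference is in the $D^\alpha_k$ step. The appendix invokes the closed-form observation from \citet{tillinghast2021nonparametric} that, conditional on the other modes, $D^\alpha_k$ is exactly Poisson with mean $\alpha\,\psi(\gamma_\alpha^{(-k)})$, where $\psi(\lambda)=\int(1-e^{-\lambda x})m(dx)$ is the Laplace exponent of the gamma subordinator, and then uses L'H\^opital to get $\psi(\lambda)\asymp\log\lambda$. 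Your threshold-truncation argument (retain atoms of weight $\ge\alpha^{-(K-1)+\epsilon}$, show each is active with probability $1-e^{-c\alpha^{\epsilon}}$, count such atoms via the L\'evy intensity) is more hands-on but reaches the same order $\alpha\log\alpha$; it is slightly less sharp in the constant but entirely adequate for the $o(\cdot)$ conclusion. Conversely, you go further than the appendix in one respect: Lemma~3.2 is stated only with probability $1-(C\alpha)^{-K}$, whereas you correctly flag that the almost-sure statement for continuous $\alpha$ needs a monotone coupling of the gamma processes and the PPP across $\alpha$, plus interpolation between integer $\alpha$. That extra step is exactly what is required to recover the cited lemma from the appendix bounds, and your description of it is accurate.
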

Note that this is an asymptotic notion of structural sparsity --- with the hyper-graph volume growing (\ie increasing $\alpha$), the proportion of the sampled edges is tending to zero. It is different from other notions~\citep{choi2014dfacto,hu2015zero} where the sparsity means data is dominated by zero values.  

Given the sparse hypergraph prior, we then sample the observed edges (interactions) and associated events, $\Dcal = \{(\bi_1,\s_{\bi_1}), \ldots, (\bi_N, \s_{\bi_N})\}$. Since they are always finite, we 
can use the standard PPP construction~\citep{kingman1992poisson}  to sample these observations, which \cmt{is equivalent to \eqref{eq:ppp} yet }is computationally much more convenient and efficient. Specifically, we normalize the mean measure $M = \sum_{r=1}^{R} W^\alpha_{1,r} \times \ldots \times W^\alpha_{K,r}$ in \eqref{eq:ppp} to obtain a probability measure, and use it to sample the  $N$ points (\ie edges/interactions) independently. To normalize $M$, we need to first normalize each \gap $W^\alpha_{k,r} (1 \le k \le K, 1 \le r \le R )$, which gives a Dirichlet process  (DP)~\citep{ferguson1973bayesian}, with the strength as $\beta_\alpha([0, \alpha]) = \alpha$, and  base measure as the normalized base measure of $W^\alpha_{k,r}$ that is  a uniform distribution in $[0, \alpha]$, 	
\begin{align}
	G^k_r &\sim \text{DP}\big(\alpha, \text{Uniform}([0, \alpha]) \big),  \label{eq:dp}
\end{align}
where $1\le k \le K$ and $1 \le r \le R$. The normalized $M$ is $\frac{1}{R} \sum_{r=1}^{R} G^1_{r} \times \ldots \times G^K_{r}$. To capture rich structural information,  we follow ``Model-II'' in~\citep{tillinghast2021nonparametric} to sample multiple DP weights for each node. Specifically, we drop the locations, and only sample the weights, which follow the GEM distribution~\citep{griffiths1980lines,engen1975note,mccloskey1965model}, and obtain
\begin{align}
	\widehat{G}_r^k = \sum_{j=1}^\infty \omega^k_{rj}\cdot  \delta_{j}.  \label{eq:dp-sample}
\end{align}
Accordingly, we construct a probability measure over all possible edges (interactions), 
\begin{align}
	\hM = \sum_{\bi=(1, \ldots, 1)}^{(\infty, \ldots, \infty)}  w_{\bi} \cdot \delta_{\bi}, \label{eq:entry-dist}
\end{align}
where  $w_{\bi} = \frac{1}{R} \sum_{r=1}^{R} \prod_{k=1}^K \omega_{ri_{k}}^k$.
We then sample each observed interaction $\bi_n \sim \hM$, and  the probability is
\begin{align}
	p(\Ecal) = \prod\nolimits_{n=1}^N p(\bi_n) = \prod\nolimits_{n=1}^N w_{\bi_n}.  \label{eq:edge-prob}
\end{align}

Now, it can be seen that from \eqref{eq:dp} and \eqref{eq:dp-sample}, for each node $j$ of type $k$, we have sampled a set of $R$ weights $ \{\omega^k_{1j}, \ldots, \omega^k_{Rj}\}$ from $R$ DPs.  From \eqref{eq:entry-dist}, we can see  these weights reflect the activity of the node interacting with other nodes (of different types). Each weight naturally represents the sociability in one community/group, and these communities are overlapping. We use these sociabilities  to construct the embeddings of the nodes. Therefore, they encode the sparse structural information underlying the observed interactions \footnote{Model-II in~\citep{tillinghast2021nonparametric} actually has made an additional adjustment on top of \eqref{eq:ppp}. According to the superposition theorem, $\text{PPP}(\sum_{r=1} ^RW_{1,r}^\alpha \times \ldots \times W_{K,r}^\alpha) \overset{D}{=} \sum_{r=1}^R \text{PPP}(W_{1, r}^\alpha \times \ldots \times W_{K,r}^\alpha)$. It means \eqref{eq:ppp} essentially samples $R$ hypergraphs independently and places them together. Model-II further performs a probabilistic merge of the $R$ hypergraphs. To see this,  from \eqref{eq:dp-sample} the nodes of each type in each hypergraph are indexed by the same set of integers  ($1, 2, 3, \ldots$), and so all the possible edges in each hypergraph are indexed by the same set of index tuples.  From \eqref{eq:entry-dist} and \eqref{eq:edge-prob},  the probability of sampling a particular edge indexed by $\bi$ is $\omega_\bi = \frac{1}{R}\sum_{r=1}^R \prod_{k=1}^K \omega_{ri_k}^k$. This can be explained as the following merging procedure. We randomly select one hypergraph (with probability $\frac{1}{R}$) and check if edge $\bi$ has been sampled in that hypergraph. If it has, we add edge $\bi$ in the new graph; otherwise, we do not add the edge. Since in each hypergraph $r$, the probability of edge $\bi$ being sampled is $\prod_{k=1}^K \omega_{ri_k}^k$,  the overall probability of sampling the edge $\bi$ in the new graph is the average, $\omega_\bi = \frac{1}{R}\sum_{r=1}^R \prod_{k=1}^K \omega_{ri_k}^k$. It is trivial to see that the merged hypergraph is still asymptotically sparse: since these hypergraphs can be viewed as independently sampled based on the same set of nodes, each of which is asymptotically sparse,  their summation is also asymptotically sparse. 
	The benefit  is that since we align these hypergraphs via the integer indices of the nodes, we can assign multiple sociabilities for each node to be better able  to capture the abundant structural information.
}

Given the sampled interactions $\Ecal$, we then sample their occurred events  $\Scal = [\s_{\bi_1}, \ldots, \s_{\bi_N}]$. To flexibly capture the temporal patterns, we use non-homogeneous Poisson processes.  For each observed interaction $\bi_n$, we consider a raw rate function $\rho_{\bi_n}(t)$, and then link it to a positive rate function by taking the square, $\lambda_{\bi_n}(t) = \left(\rho_{\bi_n}(t)\right)^2$. Note that we can also use $\exp(\cdot)$, which, however, performs worse in our experiments.  In order to capture the complex relationships of the  rate functions and their temporal variations, we jointly sample  the collection of rate functions, $\brho = \{\rho_{\bi_n}(t)|1\le n \le N\}$, from a matrix Gaussian process (MPG)~\citep{Rasmussen06GP}, 
\begin{align}
	\brho \sim \MN(\0; \kappa_1(\x_{\bi}, \x_{\bi'}), \kappa_2(t, t')), \label{eq:mgp}
\end{align}
where $\kappa_1(\cdot, \cdot)$ is the (row) covariance function across different  interactions (hyperedges), the inputs are the embeddings of participant nodes, $\x_\bi = [\u^1_{i_1}; \ldots; \u^K_{i_K}]$, and $\kappa_2(\cdot, \cdot)$ is the (column) covariance function about the time.\cmt{Hence,  $\forall \bi, \bi', t, t'$, $\text{cov}(\rho_{\bi}(t), \rho_{\bi'}(t')) = \kappa_1(\x_{\bi}, \x_{\bi'}) \kappa_2(t, t')$.} We can choose nonlinear kernels for $\kappa_1$ and $\kappa_2$ to capture the complex relationships and temporal dependencies within $\brho$. 
Given the rate functions, we then sample the observed event sequences from 
\begin{align}
	&p(\Scal|\blambda) = \notag \\ &\prod_{n=1}^N \exp(-\int_0^T \left(\rho_{\bi_n}(t)\right)^2\d t) \prod_{j=1}^{m_{\bi_n}} \left(\rho_{\bi}(s_{\bi_n j})\right)^2, \label{eq:value-prob}
\end{align}
where $T$ is the total time span across all the event sequences.

From \eqref{eq:edge-prob} and \eqref{eq:value-prob}, we can see that, via coupling the DPs and matrix GP, both the structural properties in the sparsely observed interactions and hidden temporal relationships of the participant nodes in the events can be grasped and absorbed into the embeddings.   

\subsection{Theoretical Analysis of Sparsity}
Although \citet{tillinghast2021nonparametric} has proved the asymptotic sparsity guarantee of the hypergraph process in \eqref{eq:ppp} (referred to as sparse tensor process in their paper),  \ie Lemma \ref{lem:2}, the conclusion is rough in that we have no idea how the sparsity of the sampled hyper-graph varies along with more and more active nodes. We only know that at the limit, the sparsity ratio becomes zero. While~\citet{caron2014sparse} gave some convergence rate estimate in their binary graph generating models under a similar modeling framework, the estimate is only available when using generalized \gaps (\ggaps)~\citep{hougaard1986survival} with a particular parameter range (see Theorem 10 in their paper).  The estimate is not available for the popular ordinary \gaps as in our model.  \ggaps cannot be normalized as DPs and are much harder/inconvenient for computation and inference. To extract more theoretical insight, we prove asymptotic bounds of the sparsity ratio for our hyper-graph process, which not only deepen our understanding of the properties of the sampled structures, but also fill the gap of prior works. 
\begin{lem}
	For a sparse hyper-graph process defined as in \eqref{eq:ppp}, for all sufficiently large $\alpha$, there exists an absolute constant $C>0$ such that, with probability at least $1-(C\alpha)^{-K}$,   
	\begin{align*}
		&\frac{e^{-1.03(2K)^{1/K}K(\log \alpha)^{1/K}}}{2K\log \alpha}\cdot\left[\frac{1.82}{(K-1)\log (1.01 \alpha)}\right]^K\notag \\
		&\leq \frac{N^\alpha}{\prod_{k=1}^{K}D_k^\alpha} \leq \left[\frac{2.11}{(K-1)\log (0.99 \alpha)}\right]^K.
	\end{align*}
\label{lem:lem2}
\end{lem}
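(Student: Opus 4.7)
The plan is to separately concentrate the numerator $N^\alpha$ and the denominator $\prod_{k=1}^K D_k^\alpha$ around their typical values, and then combine via a union bound. By the superposition theorem applied to \eqref{eq:ppp}, the PPP splits as $T = \sum_{r=1}^R T_r$ where each $T_r$ has mean measure $W_{1,r}^\alpha \times \cdots \times W_{K,r}^\alpha$. Conditional on the Gamma totals $\ell_{k,r} := W_{k,r}^\alpha([0,\alpha]) \sim \mathrm{Gamma}(\alpha,1)$, the component count $N_r$ is Poisson with mean $\prod_k \ell_{k,r}$, and by the classical normalization identity the $k$-th coordinates of the $N_r$ points of $T_r$ are i.i.d.\ draws from $\mathrm{DP}(\alpha,\mathrm{Unif}[0,\alpha])$. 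Since these base measures are continuous, atoms used across different $r$ are almost surely distinct, so $D_k^\alpha$ is simply the sum over $r$ of the CRP ``new-table'' counts.

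For $N^\alpha$: Bernstein applied to $\ell_{k,r}$ pins each Gamma total to $[0.99\alpha,1.01\alpha]$ simultaneously across the $RK$ pairs with probability $1-e^{-c\alpha}$, hence $\Lambda := \sum_r \prod_k \ell_{k,r} \in [R(0.99\alpha)^K, R(1.01\alpha)^K]$; a one-sided Poisson Chernoff bound then concentrates $N^\alpha$ around $\Lambda$, giving two-sided control $N^\alpha \asymp \alpha^K$ on an event of probability $\ge 1-(C\alpha)^{-K}$. For $D_k^\alpha$: by the Chinese restaurant process, the distinct-atom count among $n$ DP draws is $K_n = \sum_{i=1}^n B_i$ with $B_i \sim \mathrm{Bern}(\alpha/(\alpha+i-1))$, so $\mathbb{E}[K_n] = \alpha[\psi(\alpha+n)-\psi(\alpha)]$. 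L'H\^{o}pital applied to this digamma expression with $n \sim \alpha^K$ extracts the leading asymptotic $\alpha(K-1)\log\alpha$, and Bernstein for the (bounded-variance) Bernoulli sum concentrates $K_{N_r}^{k,r}$ around it. Summing over $r$ and union-bounding across $k$, $D_k^\alpha \asymp R(K-1)\alpha\log\alpha$ holds with high probability for every $k$, hence $\prod_k D_k^\alpha \asymp R^K \alpha^K [(K-1)\log\alpha]^K$.

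Combining these, the ratio $N^\alpha/\prod_k D_k^\alpha$ collapses to order $1/[(K-1)\log\alpha]^K$; the numeric constants $1.82$ and $2.11$ (and the $\log(1.01\alpha)$, $\log(0.99\alpha)$ shifts) emerge from propagating the Bernstein tolerances $(0.99,1.01,1.03,\ldots)$ through the chain of estimates. The extra factor $e^{-1.03(2K)^{1/K}K(\log\alpha)^{1/K}}/(2K\log\alpha)$ on the lower bound is the main subtlety: it is the price paid because the Poisson lower tail is cruder than the upper tail and one must simultaneously upper-bound $\prod_k D_k^\alpha$ on the same event, with a careful L'H\^{o}pital expansion that picks up this sub-polynomial correction. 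I expect the main obstacle to be the bookkeeping --- arranging that the Bernstein failure probabilities for the $O(RK)$ Gamma totals and the $RK$ distinct-count estimates, together with the Poisson tail for $N^\alpha$, all fit simultaneously under a single $(C\alpha)^{-K}$ envelope via a union bound, while preserving the tight logarithmic dependence in $\alpha$ and the absolute constants claimed in the statement.
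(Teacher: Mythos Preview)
Your plan conflates two different quantities in the analysis of $N^\alpha$. The random variable that is $\mathrm{Poisson}(\Lambda)$ conditional on the Gamma totals is $|T|$, the total number of PPP points (with multiplicity). The quantity $N^\alpha$ in the lemma is the number of \emph{distinct} support points of $T$; it satisfies $N^\alpha\le |T|$ always, but a Poisson Chernoff bound on $|T|$ gives you nothing for a lower bound on $N^\alpha$. Your claimed ``two-sided control $N^\alpha\asymp\alpha^K$'' is therefore unsupported, and your explanation of the extra factor $e^{-1.03(2K)^{1/K}K(\log\alpha)^{1/K}}/(2K\log\alpha)$ as coming from Poisson-tail asymmetry or a L'H\^opital refinement is incorrect: the Poisson tail is perfectly symmetric at this scale, and the L'H\^opital step for $D_k^\alpha$ produces only the clean $(K-1)\log\alpha$ factor.

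The paper obtains the lower bound on $N^\alpha$ by a genuinely different argument that you are missing. It fixes a threshold $a\asymp(2K\log\alpha)^{1/K}$ and, for each $k$, counts the atoms of $W_k^\alpha$ whose jump size exceeds $a$; by the L\'evy--It\^o construction this count is Poisson with mean $\alpha\, m([a,\infty))$, which is of order $\alpha e^{-a}/a$. Letting $\mathcal S$ be the product set of these heavy atoms, one has $|\mathcal S|\gtrsim (\alpha e^{-a}/a)^K$. The choice of $a$ is calibrated so that each $s\in\mathcal S$ has mass at least $(a/1.01\alpha)^K$ under the normalized intensity, hence is hit by at least one of the $|T|\approx\alpha^K$ i.i.d.\ draws with probability $\ge 1-\alpha^{-2K}$; a union bound over $\mathcal S$ then gives $N^\alpha\ge|\mathcal S|$ with high probability. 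The sub-polynomial factor in the statement is precisely $|\mathcal S|/\alpha^K\asymp e^{-Ka}/a^K$, i.e.\ it records how many atoms per mode are heavy enough to be individually visible. Without this heavy-atom/coupon-collector style step your lower bound does not go through.

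Your treatment of $D_k^\alpha$ via the Chinese restaurant process is a legitimate alternative to the paper's route (the paper instead observes directly that, conditional on $\{W_\ell^\alpha\}_{\ell\neq k}$, $D_k^\alpha$ is Poisson with mean $\alpha\psi(\gamma_\alpha^{(-k)})$ for $\psi$ the L\'evy exponent, and applies L'H\^opital to $\psi(\alpha)/\log\alpha$). Both lead to $D_k^\alpha\asymp (K-1)\alpha\log\alpha$; the paper's version is cleaner because it avoids conditioning on the random sample size $N_r$ and gives an exact Poisson law to which the tail estimate applies directly.
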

\textit{Proof sketch}. We first use concentration inequalities, including Poisson tail estimate~\citep{vershynin2018high} and Bernstein’s inequality, and L'Hopital's rule to  bound the measure of each \gap on $[0, \alpha]$, and then take a union bound over $k=1\ldots K$ to obtain an upper bound of $N^\alpha$. The lower bound is more technical, and requires a careful estimate of the support of the intensity measure that appears in the sampled entries with high probability, for which we apply a novel probabilistic argument. We mainly combine Poisson tail estimates,  union bounds, the Bernoulli distribution and L'H\^opital's rule to bound each $D_k^\alpha$ and then derive the lower bound of $N^\alpha$. We leave the details in Appendix.
\section{Algorithm}
The matrix GP in our model, coupled with DPs, is computational costly. When the number of present interactions and/or the number of their events are large, we will have to compute a huge row and/or column covariance matrix, their inverse and  determinants, which is very expensive or even infeasible.  To address these issues, we use the stick-breaking construction~\citep{sethuraman1994constructive}, sparse variational GP approximation~\citep{titsias2009variational,hensman2013gaussian} and batch normalization~\citep{ioffe2015batch} to develop an efficient, scalable variational inference algorithm. 

Specifically, we use the stick-breaking construction to sample the DP weights (or GEM distribution), 
\begin{align}
	&v^k_{rj} \sim \text{Beta}(1, \alpha), \;\; \omega^k_{rj} = v^k_{rj} \prod\nolimits_{l=1}^{j-1} (1 - v^k_{rl}),\;\;\;  \label{eq:stick-breaking}
\end{align}
where $1\le j \le \infty$. Therefore, we only need to estimate the stick-breaking variables $\{v^k_{rj}\}$, from which we can outright calculate the weights (or sociabilities). Since these weights can be very small and close to zero, we use their logarithm to construct the embedding of each node $j$ of type $k$, $\u^k_j = [\log(\omega^k_{1j}); \ldots; \log(\omega^k_{Rj})]$.

Next, to conveniently handle the matrix GP prior in \eqref{eq:mgp}, we unify all the raw rate functions as one function of the embeddings and time, $\rho_{\bi}(t) = f(\x_{\bi}, t)$, over which we assign a GP prior with a product covariance (kernel) function, 
$\kappa([\x_{\bi}, t], [\x_{\bi'}, t']) = \kappa_1(\x_{\bi}, \x_{\bi'}) \kappa_2(t, t')$. This is computationally  equivalent to \eqref{eq:mgp} because they share the same covariance function. But we only need to deal with one function. Accordingly, the function values at the event time-stamps (across all the interactions), $\f = \{f(\x_{\bi_n}, s_{\bi_n j})\}_{n, j}$ follow a multivariate Gaussian distribution, 
\begin{align}
	p(\f|\Ucal) = \N(\f|\0, \kappa(\X, \X)),
\end{align}
where each row of $\X$ consist of the embedding $\x_{\bi_n}$ and a time-stamp $s_{\bi_n j}$. Combing with \eqref{eq:edge-prob}  \eqref{eq:value-prob} \eqref{eq:stick-breaking}, the joint probability of our model is 
\begin{align}
	&p(\{\v^k_j\}_{1\le j \le D_k, 1 \le k \le K}, \Ecal, \Scal, \f) \notag \\=& \prod_{k=1}^K \prod_{j=1}^{D_k}\prod_{r=1}^R \text{Beta}(v^k_{rj}|1, \alpha) \cdot \N\left(\f|\0, \kappa(\X, \X)\right) \label{eq:joint} \\
	\cdot &  \prod_{n=1}^N \omega_{\bi_n} \exp(-\int_0^T \left(f(\x_{\bi_n},t)\right)^2\d t) \prod_{j=1}^{m_{\bi_n}} \left(f(\x_{\bi_n}, s_{\bi_n j})\right)^2, \notag
\end{align}
where $\v^k_j = \{v^k_{jr}\}_{1\le r \le R}$. The stick-breaking variables associated with inactive nodes, \ie the nodes that do not participate any interactions, have been marginalized out. 

Next, to dispense with the huge covariance matrix in \eqref{eq:joint} for $\f$, we use the sparse variational GP approximation~\citep{hensman2013gaussian} to develop a variational inference algorithm. Specifically, we introduce a small set of pseudo inputs $\Z = [\z_1, \ldots, \z_h]^\top$ for $f(\cdot)$, where $h$ is far less than the dimension of $\f$. We then define the pseudo outputs $\b = [f(\z_1), \ldots, f(\z_h)]^\top$. We augment our model by jointly sampling $\{\f, \b\}$. Due to the GP prior over $f(\cdot)$, $\{\f, \b\}$  follow a multivariate  Gaussian distribution that can be decomposed as $p(\f, \b) = p(\b) p(\f|\b)$,
where $p(\b) = \N\big(\b|\0, \kappa(\Z, \Z)\big)$, $p(\f|\b) = \N(\f|\m_{f|b}, \bSigma_{f|b})$ is a conditional Gaussian distribution,   $\m_{f|b} = \kappa(\X, \Z)\kappa(\Z, \Z)^{-1}\b$ and $\bSigma_ {f|b} = \kappa(\X, \X) - \kappa(\X, \Z)\kappa(\Z, \Z)^{-1}\kappa(\Z, \X)$. The probability of the augmented model has the following form, 
\begin{align}
	p(\text{Joint}) = \text{OtherTerms} \cdot p(\b) p(\f|\b) p(\Scal|\f), \label{eq:joint-2}
\end{align}
where
$$p(\Scal|\f) =  \prod_{n=1}^N \exp(-\int_0^T (f(\x_{\bi_n},t))^2\d t) \prod_{j=1}^{m_{\bi_n}} (f(\x_{\bi_n}, s_{\bi_n j}))^2$$
 is the likelihood of the events. Compared to \eqref{eq:joint}, we just replace the Gaussian prior over $\f$ by the joint Gaussian prior over $\{\f, \b\}$. If we marginalize out $\b$, we will recover the original distribution \eqref{eq:joint}. Now, we construct a variational evidence lower bound (ELBO) to avoid computing the covariance matrix $\kappa(\X, \X)$. To this end, we introduce a variational  posterior for $\{\f, \b\}$,
$q(\f, \b) = q(\b)p(\f|\b)$,
where $q(\b) = \N(\b|\bmu, \L\L^\top)$, and $\L$ is a lower triangular matrix. Note that $\L\L^\top$ is essentially a Cholesky  decomposition, and we use it to ensure the positive definiteness of the posterior covariance matrix. We then derive the EBLO 
\begin{align}
	&\Lcal = \EE_{q(\b, \f)} \left[\log\frac{p(\text{Joint})}{q(\b, \f)}\right] \notag \\
	&=\EE_q \left[\log \frac{\text{OtherTerms} \cdot p(\b)\cancel{p(\f|\b)} p(\Scal|\f)}{q(\b)\cancel{p(\f|\b)}}\right]. \notag 
\end{align}
Now we can see that the full conditional Gaussian distributions $p(\f|\b)$ is canceled. We only need to calculate the $h \times h$ covariance matrix for $p(\b)$, which is very small. Hence, the cost  is largely reduced. The detailed ELBO is given by
\begin{align}
	&\Lcal =  - \text{KL}(q(\b)\| p(\b)) + \sum\nolimits_{n=1}^N \log w_{\bi_n} \notag  \\
	&  +\sum\nolimits_{k=1}^K \sum\nolimits_{j=1}^{D_k}\sum\nolimits_{r=1}^R \log \text{Beta}(v^k_{rj}|1, \alpha)  \label{eq:elbo}\notag \\ 
	&- \sum_{n=1}^N \EE_q\EE_{p(t)}[ T\left(f(\x_{\bi_n},t)\right)^2 ] \notag \\
	&+ \sum_{n=1}^N\sum_{j=1}^{m_{\bi_n}} \EE_q[\log (f(\x_{\bi_n}, s_{\bi_n, j}))^2] , \notag
\end{align}
where $p(t) = \text{Uniform}(0, T)$, and $\text{KL}(\cdot, \cdot)$ is the Kullback-Leibler divergence. \cmt{Note that we use the fact that $\int_0^T \left(f(\x_{\bi_n},t)\right)^2\d t = \EE_{p(t)}\left[T\cdot \left(f(\x_{\bi_n},t)\right)^2\right] $.}We maximize $\Lcal$ to estimate the variational posterior $q(\b)$ and the other parameters, including the stick-breaking variables $\{\v^k_j\}$, kernel parameters, \etc Due to the additive structure over both the interactions and their events, it is straightforward to combine with the reparameterization trick~\citep{kingma2013auto} to perform  efficient stochastic mini-batch optimization. 

However, since our embeddings are constructed from the logarithm of the sociabilities (in $[0, 1]$), $\u^k_j = [\log(\omega^k_{1j}); \ldots; \log(\omega^k_{Rj})]$, and  these sociabilities are often small, close to zero, their log scale can be quite big, \eg hundreds. As a result, when we feed the input $\x_{\bi} = [\u^1_{i_1}, \ldots, \u^K_{i_K}]^\top$ to the GP kernel (\eg we used SE kernel in the experiments), it is easy to incur numerical issues or make the kernel matrix stuck to be diagonal. To address this issue, we use the batch normalization method~\citep{ioffe2015batch}. That is, we jointly estimate an (empirical) mean and standard deviation for each embedding element during our stochastic mini-batch optimization. Denote them by $\boldeta$ and $\bsigma$. Each time,  we first normalize each $\x_{\bi}$ by $$\x_{\bi} \leftarrow \frac{\x_{\bi} - \boldeta}{\bsigma},$$ and then feed them to the kernel; $\boldeta$ and $\bsigma$ are jointly updated with all the other parameters using stochastic gradients. We empirically found the numerical problem disappears, and the learning is effective (see Sec. \ref{sect:exp}). 

\noindent\textbf{Algorithm Complexity.} The time complexity of our inference algorithm is $\Ocal\big( m h^2 +KR\sum_{k=1}^KD_k\big)$ where $m = \sum_{n=1}^N m_{\bi_n}$ is the total number of events. Since $h \ll m$,  the computational cost is linear in $m$. The space complexity is $\Ocal (h^2 + R\sum_{k=1}^KD_k )$, including the storage of the prior and posterior covariance matrices for pseudo outputs $\b$ and  embeddings $\Ucal$.

\section{Related Work}
It is natural to represent high-order interactions by multidimensional arrays or tensors. Tensor factorization is the fundamental framework for tensor analysis. Classical tensor factorization approaches  include CP~\citep{Harshman70parafac}  and Tucker~\citep{Tucker66} decomposition, based on which numerous other methods have been proposed:~\citep{Chu09ptucker,kang2012gigatensor,YangDunson13Tensor,choi2014dfacto,du2018probabilistic,fang2021bayesian}, to name a  few. Recently, nonparametric and/or neural network  factorization models~\citep{zhe2015scalable,zhe2016distributed,zhe2016dintucker,liu2018neuralcp,pan2020streaming,tillinghast2020probabilistic,fang2021streaming,tillinghast2021nonparametric} were developed to estimate nonlinear relationships in data, and have shown advantages over popular multilinear methods in prediction accuracy. When dealing with temporal information, existing methods mainly use homogeneous Poisson processes and decompose the event counts~\citep{chi2012tensors,HaPlKo15,Hu2015CountTensor}. More advanced approaches further partition the time stamps into different steps, and perform count factorization across the time steps~\citep{xiong2010temporal, schein2015bayesian,Schein:2016:BPT:3045390.3045686,schein2019poisson}. Recently, \citet{zhe2018stochastic} used Hawkes processes to estimate the local triggering effects between the events, and modeled the triggering strength with a kernel of the embeddings of the interactions. \citet{pan2020scalable} modeled the time decay as another kernel of the embeddings, and developed scalable inference for long event sequences. \citet{wang2020self} proposed a non-Hawkes, non-Poisson process to estimate the triggering and inhibition effects between the events. All these are temporal point processes that focus on rate modeling, and are different from the PPPs (with mean measure) in \ours to sample sparse interaction structures. 
\citet{lloyd2015variational} proposed GP modulated Poisson processes and also used the square link to ensure a positive rate function.  However, the work is purely about event modeling and does not learn any embedding. With the SE kernel, it derives an analytical form of ELBO. However, since our model includes the embedding (log sociabilities) in the GP prior, the ELBO is analytically intractable, and we use the reparameterization trick to conduct stochastic optimization. Recently, \citet{pan2021self} proposed a self-adaptable point process for event modeling, which can estimate both the triggering and inhibition effects within the events. More important, they construct a GP based component to enable a nonparametric estimate of the time decays of these effects. Their point process is not a Poisson process any more. 

Our hyper-graph prior is inherited from the sparse tensor process in~\citep{tillinghast2021nonparametric}, which can be viewed as a multi-dimensional extension of the pioneer work of \citet{caron2014sparse,caron2017sparse}, who first  used completely random measures (CRMs)~\citep{kingman1967completely,kingman1992poisson,lijoi2010models}, such as Gamma processes (\gaps)~\citep{hougaard1986survival}, to generate sparse random graphs. \cmt{\citet{williamson2016nonparametric} considered the case of  Gamma processes~\citep{hougaard1986survival,brix1999generalized}, and used the normalized version, \ie Dirichlet processes~\citep{ferguson1973bayesian}, to develop models for link prediction.} However, these prior works only show the asymptotic sparse guarantee (\ie the sparsity ratio converges to zero at the limit), yet not giving any convergence rate estimate for popular \gaps that are convenient for  inference and computation.  Our work fills this gap by giving strong asymptotic bounds about the sparsity, including both a lower and upper bound, which can reveal more refined insight about these sparse priors. Furthermore, we couple the sparse  prior with matrix GPs  to jointly sample the interactions (\ie hyperedges) and their event sequences. In this way, the embeddings can assimilate both the sparse structural information underlying the present interactions and the hidden temporal relationship of the participants. 

\cmt{
In spite of the success of the existing models for interaction events, they overlook the sparse nature of the present interactions (against all possible interactions between the participants), and hence miss important structural information. To overcome this limitation, we view the participants of the interactions events as the nodes in a hypergraph, and their interactions as hyperedges. The event sequences are attached on the edges. We propose a stochastic process to sample the hypergraphs, with the guarantee of asymptotic sparsity. In the modeling of ordinary graphs,  \citet{caron2014sparse,caron2017sparse} first  used completely random measures (CRMs)~\citep{kingman1967completely,kingman1992poisson,lijoi2010models} to generate sparse random graphs. \citet{williamson2016nonparametric} considered the case of  Gamma processes~\citep{hougaard1986survival,brix1999generalized}, and used the normalized version, \ie Dirichlet processes~\citep{ferguson1973bayesian}, to develop models for link prediction. Our model can be viewed as an extension of these seminal works  for hypergraphs. Furthermore, we couple our stochastic hypergraph process with matrix Gaussian process  to jointly sample the interactions (\ie hyperedges) and their event sequences. In this way, the embeddings can assimilate both the sparse structural information underlying the present interactions and the hidden temporal relationships between the participant nodes. For sparse random graph generation,  \citet{crane2015framework,cai2016edge}  also proposed edge-exchange models; see thorough discussions in \citep{crane2018edge}. 
}
\section{Experiment}\label{sect:exp}
\begin{figure}
	\centering
	\includegraphics[width=0.4\textwidth]{./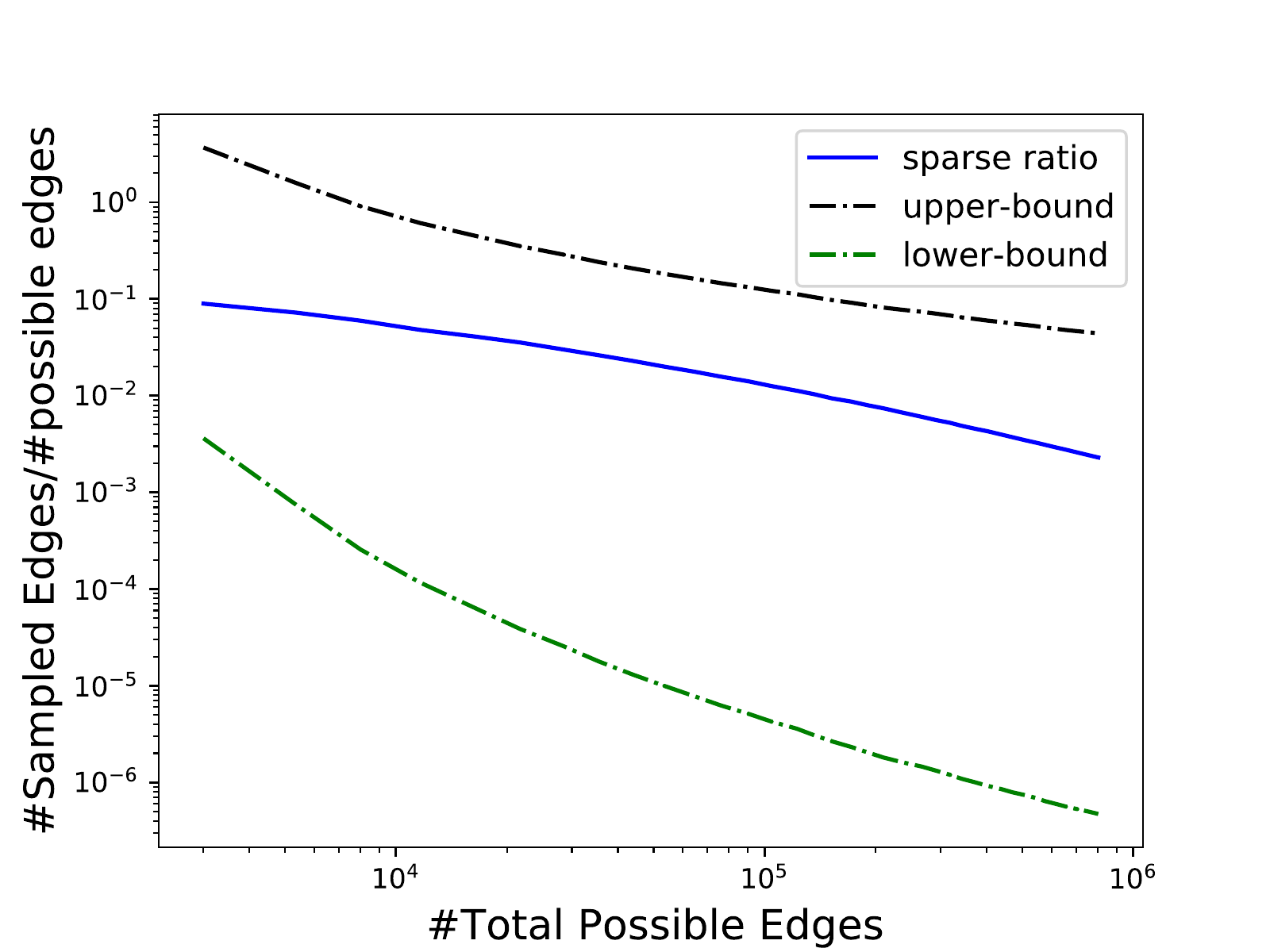}
	\caption{\small Sparsity ratio of the sampled hypergraphs and bounds.} \label{fig:sparse}
\end{figure}
\subsection{Sparsity Ratio Investigation}
\vspace{-0.05in}
We first examined if our theoretical bounds in Lemma \ref{lem:lem2} match the actual sparsity ratio in the sampled hyper-graphs. To this end, we followed~\citep{tillinghast2021nonparametric} to sample a series of hyper-graphs with three-way edges (interactions), namely $K=3$. We set $R=1$ and varied $\alpha$ in $[2, 20]$. For each particular $\alpha$, we independently sampled $200$ hypergraphs and computed average ratio of the sampled edges. We calculated the bounds accordingly. We show the results in a log-log plot as in Fig. \ref{fig:sparse}. As we can see, the bounds clamp the actual sparsity ratio and match the trend well. The upper bound is tighter. Hence, these bounds can provide a reasonable estimate of the convergence rate and characterize the asymptotic behaviors of the structural sparsity. 
\subsection{Predictive Performance}
\begin{figure*}[t]
	\centering
	\setlength\tabcolsep{0.01pt}
	\begin{tabular}[c]{ccc}
		\multicolumn{3}{c}{\includegraphics[width=0.618\textwidth]{./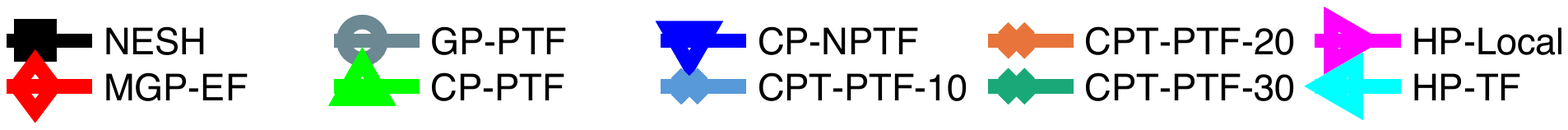}}
		\\
		\begin{subfigure}[t]{0.3\textwidth}
			\centering
			\includegraphics[width=\textwidth]{./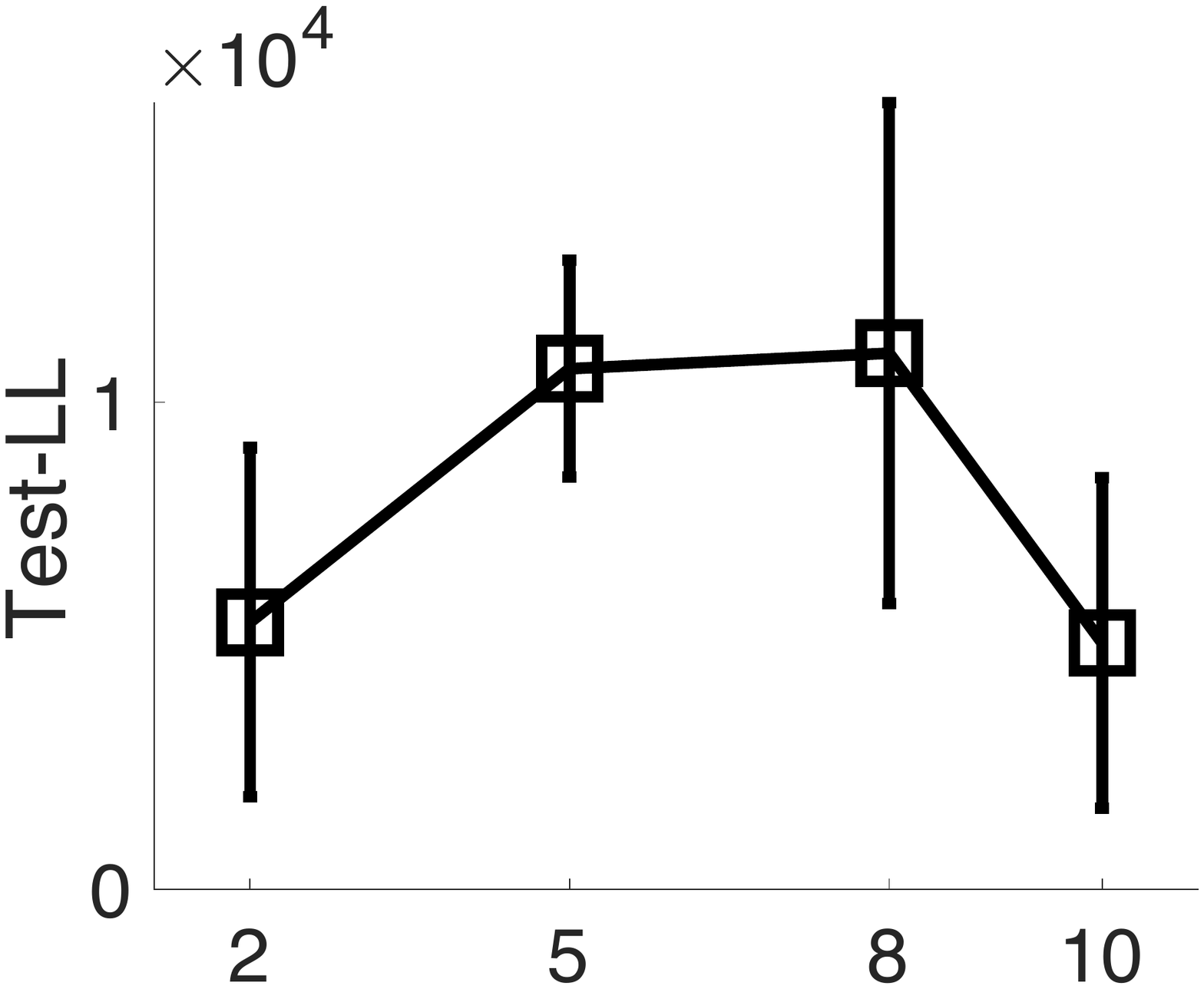}
		\end{subfigure} 
		&
		\begin{subfigure}[t]{0.3\textwidth}
			\centering
			\includegraphics[width=\textwidth]{./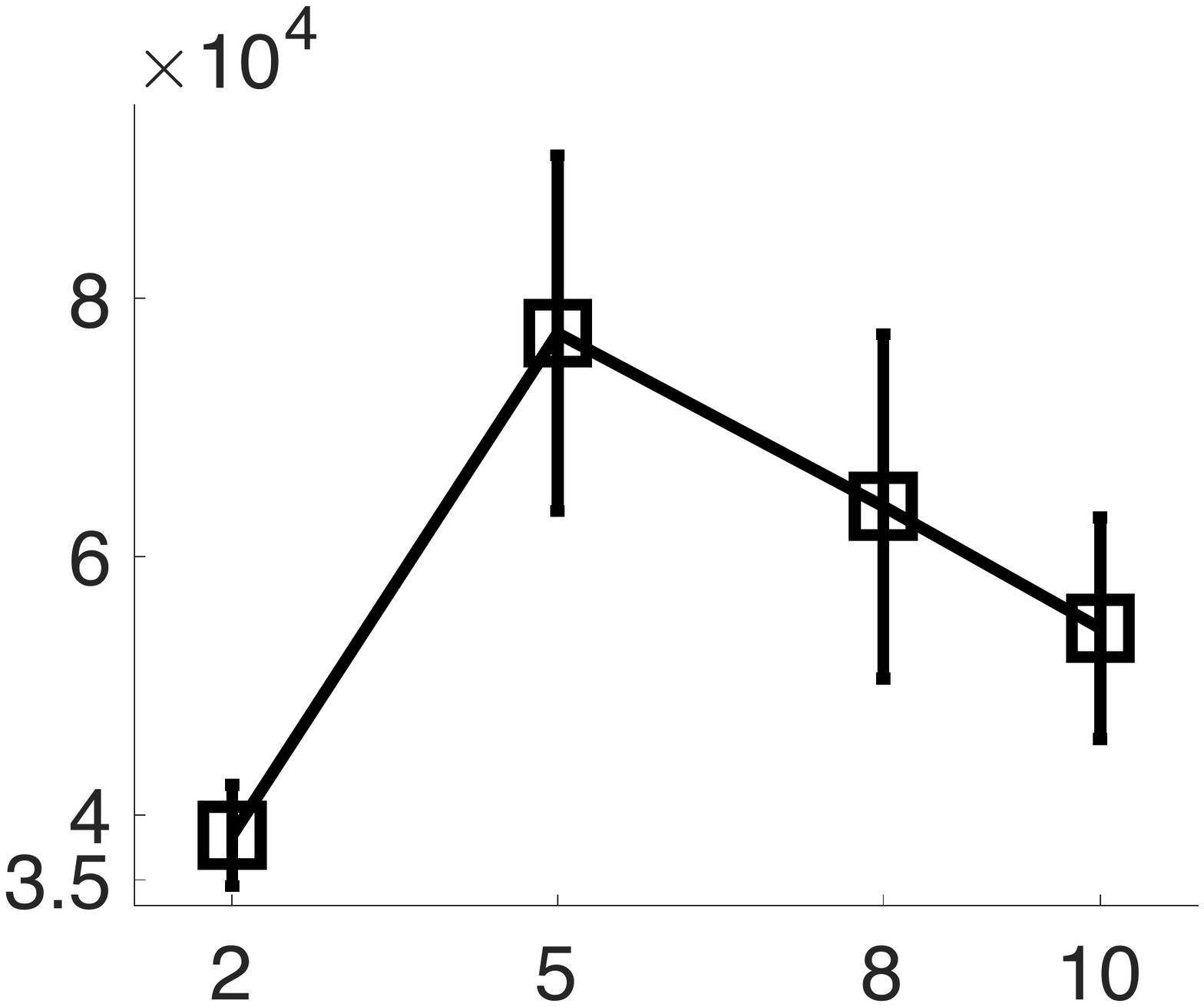}
		\end{subfigure} 
		&
		\begin{subfigure}[t]{0.3\textwidth}
			\centering
			\includegraphics[width=\textwidth]{./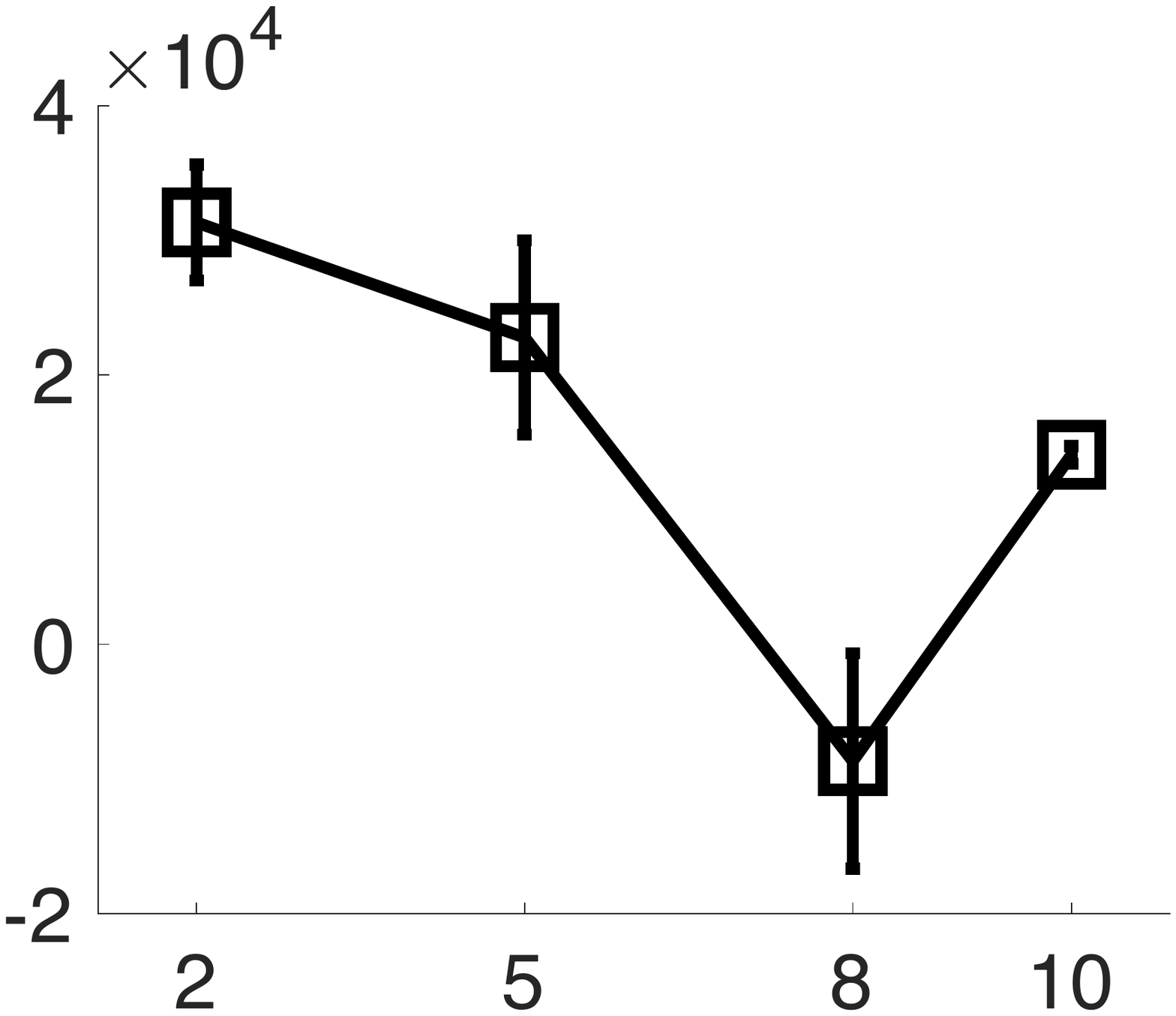}
		\end{subfigure} \\
		\begin{subfigure}[t]{0.3\textwidth}
			\centering
			\includegraphics[width=\textwidth]{./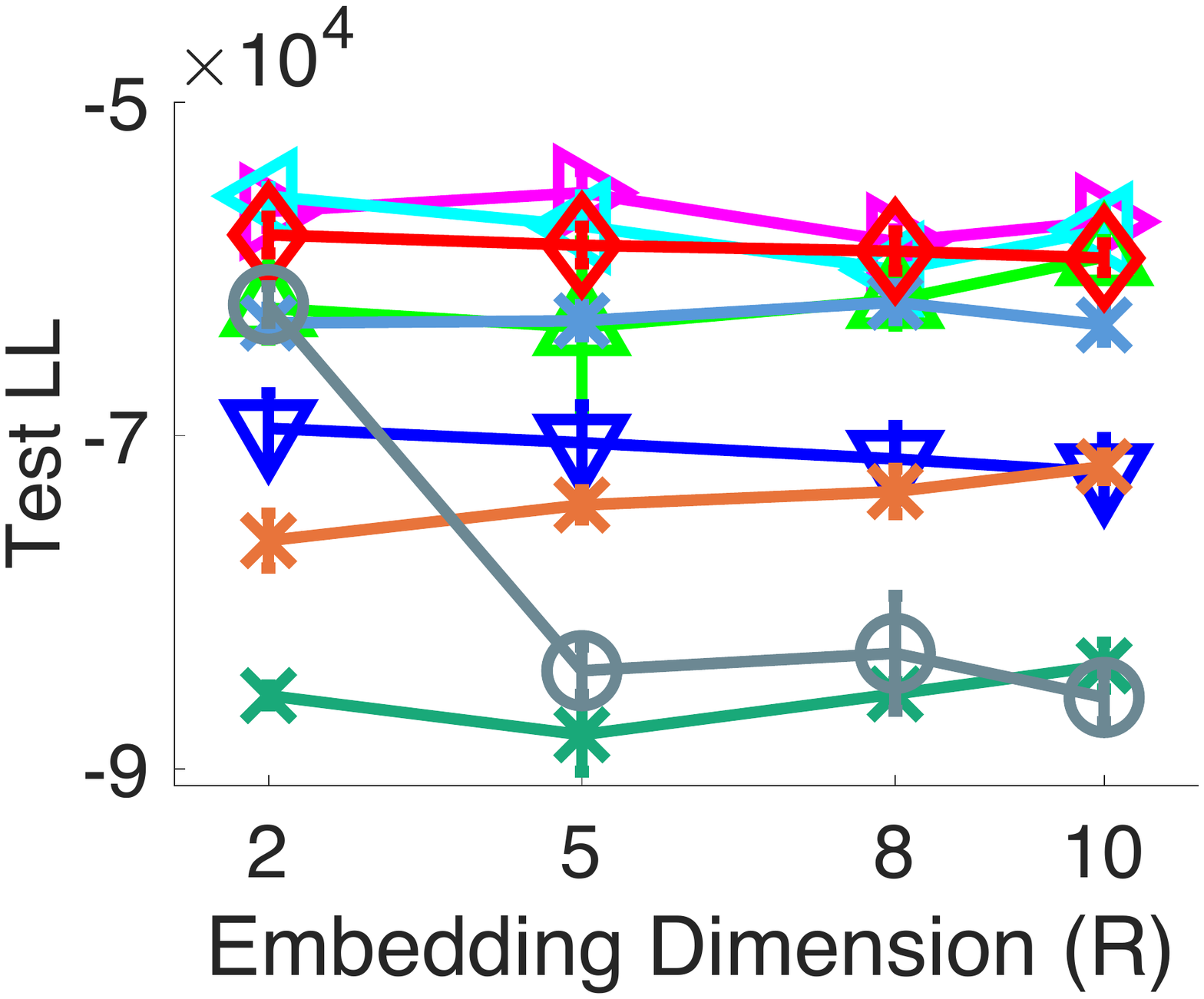}
			\caption{\textit{Crash}}
		\end{subfigure} 
		&
		\begin{subfigure}[t]{0.3\textwidth}
			\centering
			\includegraphics[width=\textwidth]{./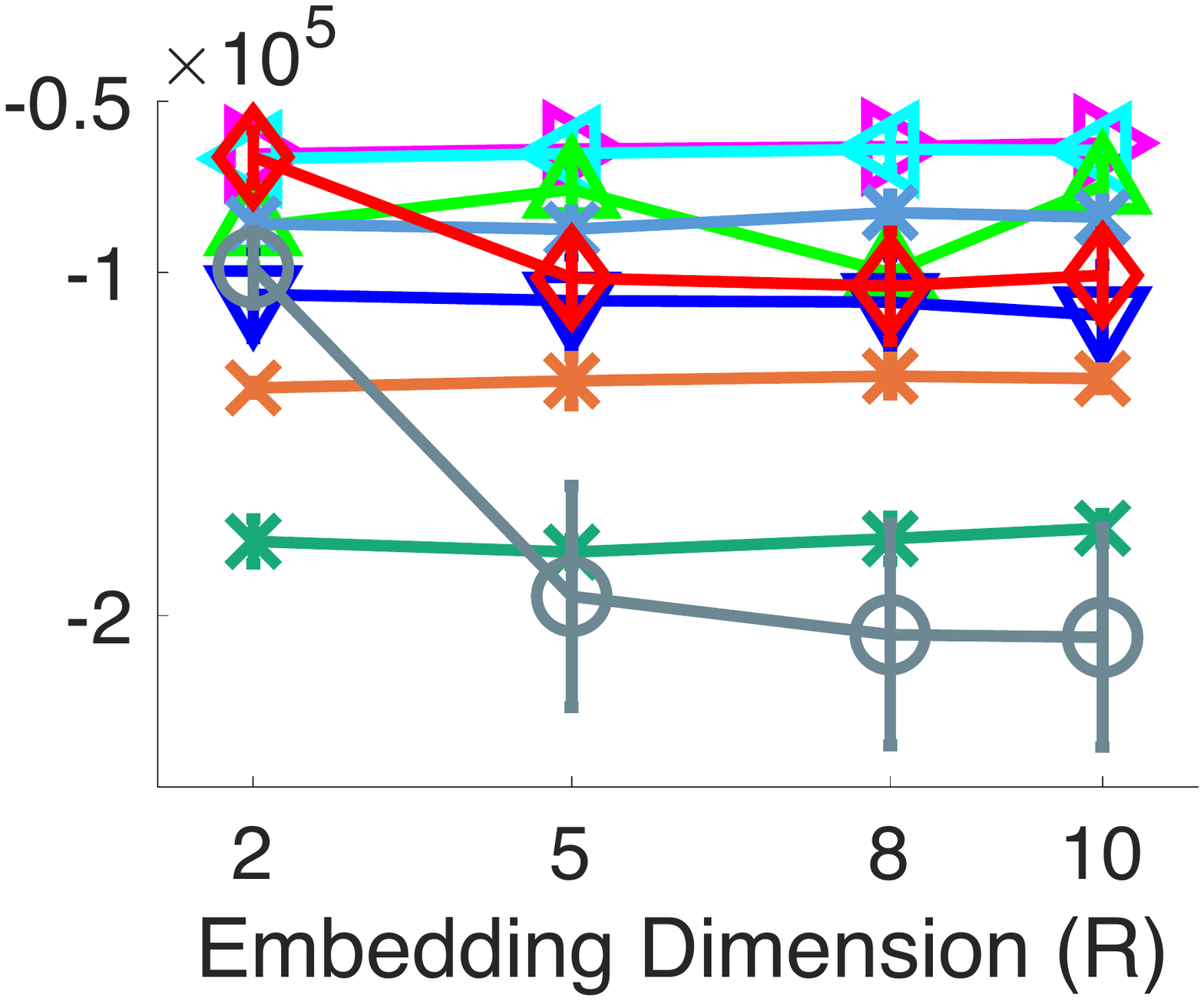}
			\caption{\textit{Taobao}}
		\end{subfigure} 
		&
		\begin{subfigure}[t]{0.3\textwidth}
			\centering
			\includegraphics[width=\textwidth]{./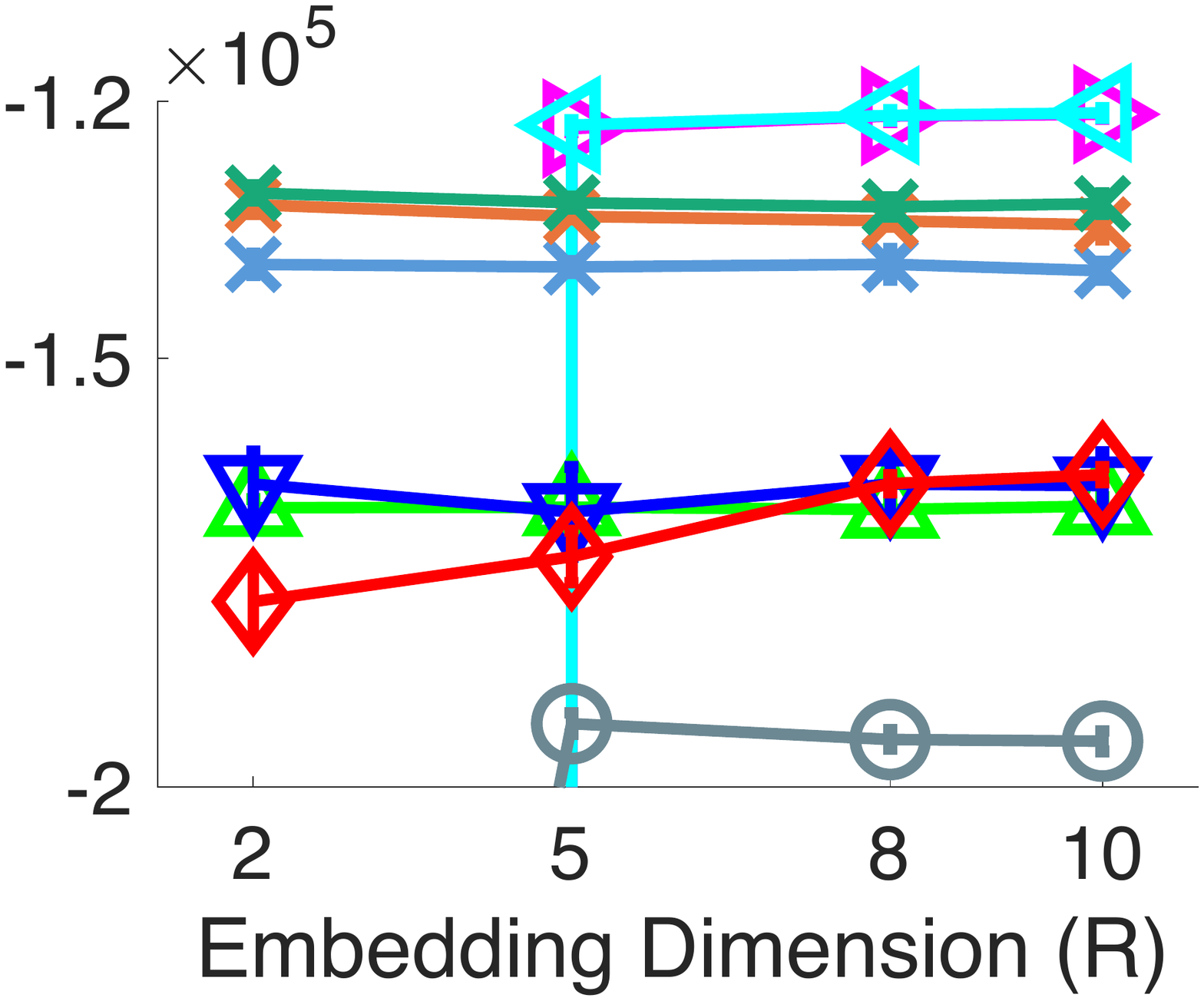}
			\caption{\textit{Retail}}
		\end{subfigure}
	\end{tabular}

	\caption{Test log-likelihood (LL) on real-world datasets. CPT-PTF-\{10, 20, 30\} means running CPT-PTF with 10, 20 and 30 time steps. The results were averaged over five runs. } 	
	\label{fig:test-ll}
\end{figure*}
\begin{figure*}[t]
	\centering
	\setlength\tabcolsep{0.0pt}
	\begin{tabular}[c]{cccc}
		\begin{subfigure}[t]{0.2\textwidth}
			\includegraphics[width=\textwidth]{./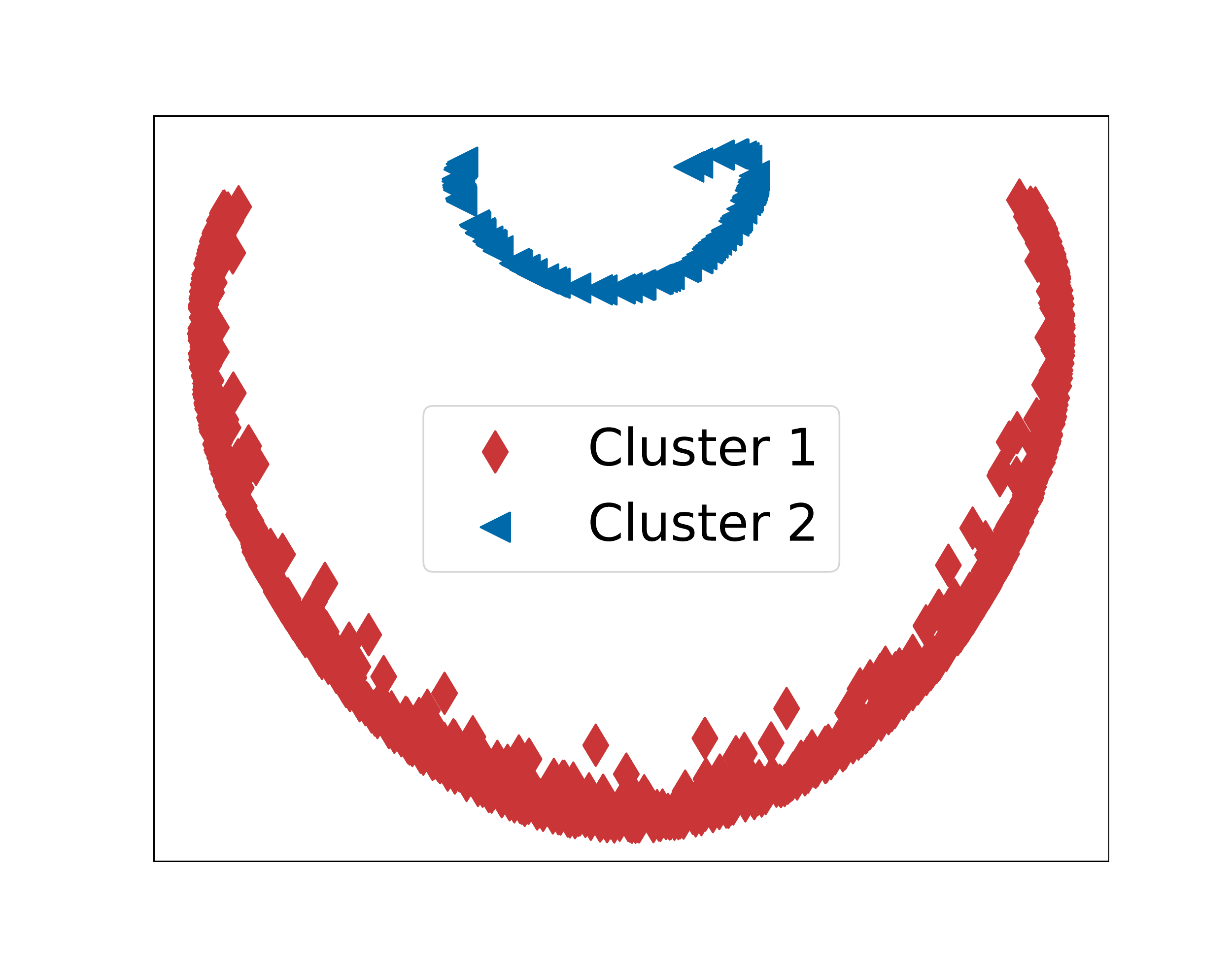}
			\caption{\textit{Users}}
		\end{subfigure}
		&
		\begin{subfigure}[t]{0.2\textwidth}
			\includegraphics[width=\textwidth]{./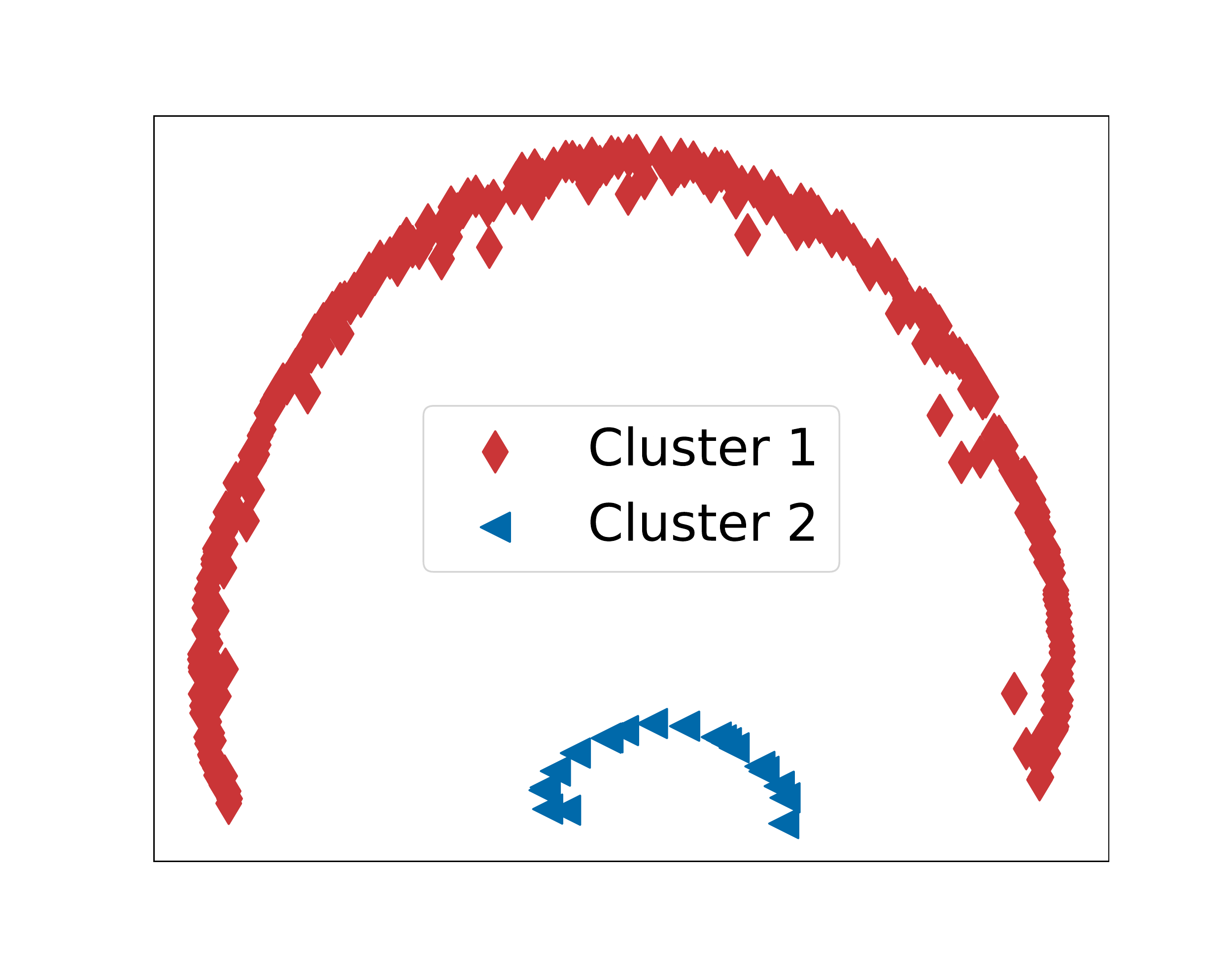}
			\caption{\textit{Sellers}}
		\end{subfigure}
		&
		\begin{subfigure}[t]{0.2\textwidth}
			\includegraphics[width=\textwidth]{./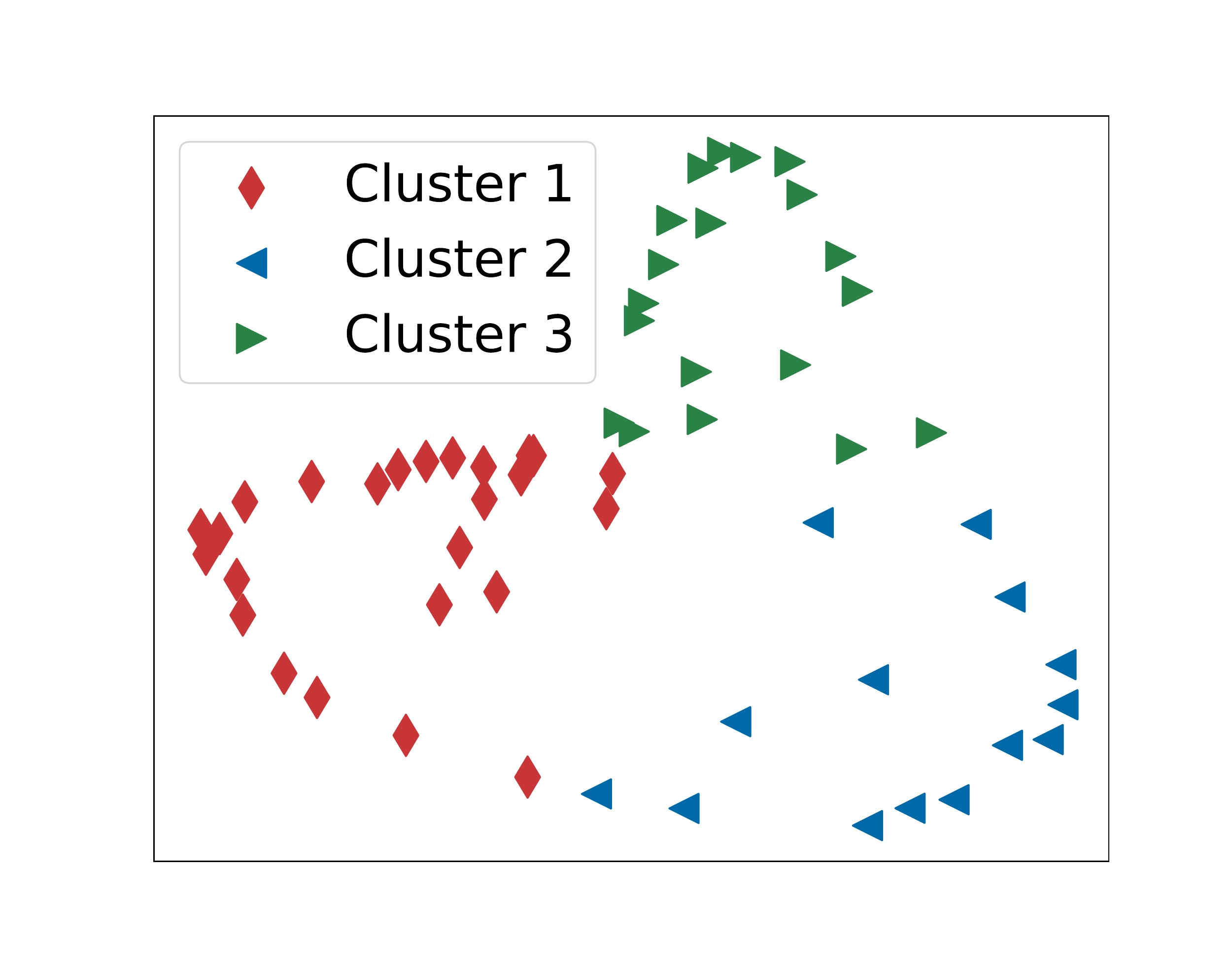}
			\caption{\textit{Item Categories}}
		\end{subfigure}
		&
		\begin{subfigure}[t]{0.2\textwidth}
			\includegraphics[width=\textwidth]{./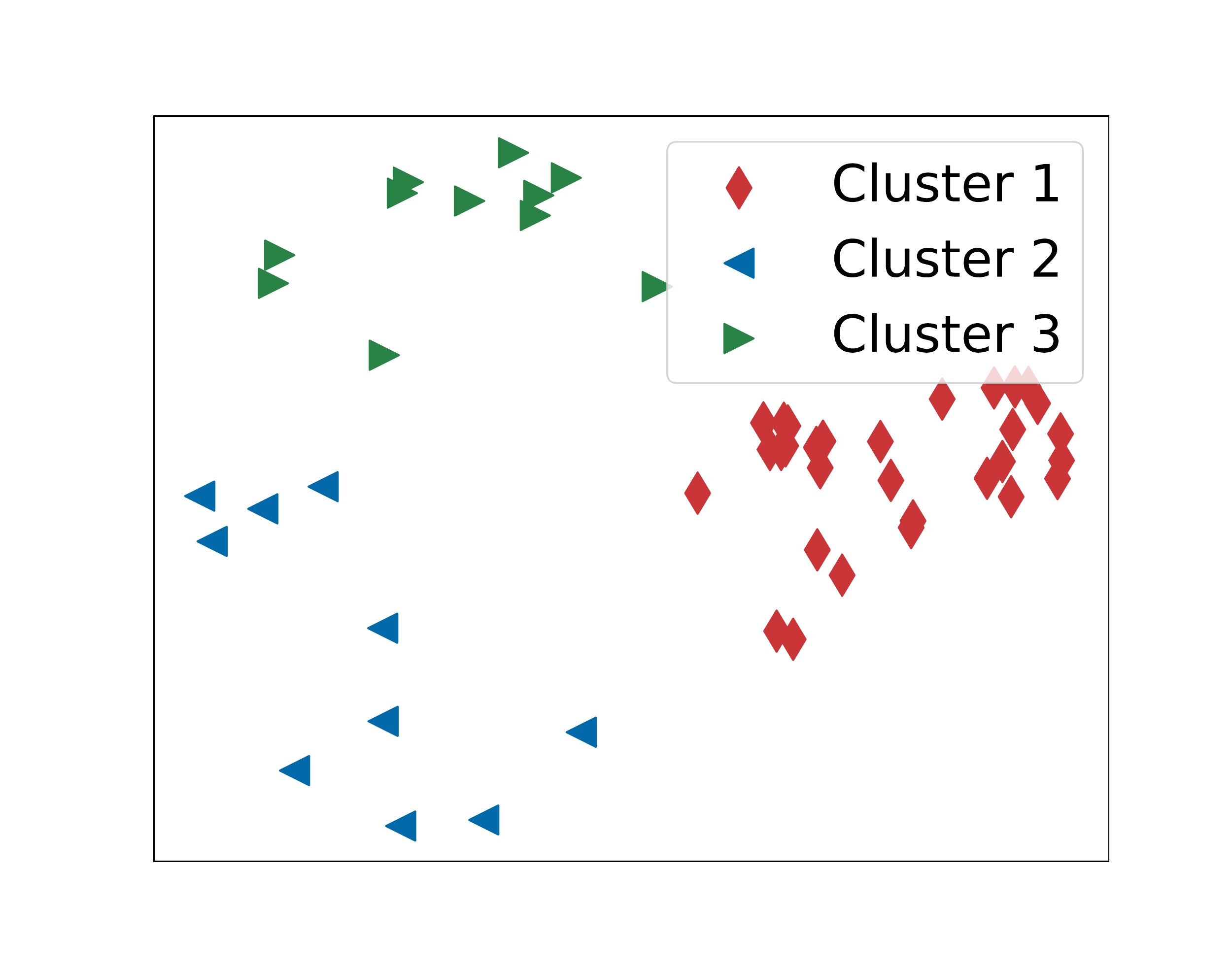}
			\caption{\textit{States}}
		\end{subfigure}
	\end{tabular}
	\caption{\small Structures of the estimated embeddings on \textit{Taobao} (a, b, c), \textit{Crash} (d). The points represent the participant nodes, and the colors indicate their cluster memberships.}\label{fig:structure}
\end{figure*}

\noindent\textbf{Datasets}. We then examined the predictive performance of \ours on the following real-world datasets. (1) \textit{Taobao} (\url{https://tianchi.aliyun.com/dataset/dataDetail?dataId=53}), the shopping events in the largest online retail platform of China, from 07/01/2015 to 11/30/2015, which are interactions between $980$ \textit{users}, $274$ \textit{sellers}, $631$ \textit{items},  $58$ \textit{categories} and $2$ \textit{options}. There are in total $16,609$ distinct interactions and $69,833$ events. \cmt{The existent (distinct) interactions take $0.000085\%$ of all possible interactions.} (2) \textit{Crash} (\url{https://www.kaggle.com/usdot/nhtsa-traffic-fatalities}), fatal traffic crashes in US 2015, within $51$ \textit{states}, $288$ \textit{counties}, $2,098$ cities and $5$ \textit{landuse-types}. There are $8,691$ distinct interactions \cmt{($0.0056\%$ of all possible interactions) }and $32,052$ events in total. (3) \textit{Retail} (\url{https://tianchi.aliyun.com/dataset/dataDetail?dataId=37260}), online retail records from \texttt{tmall.com}. It includes interaction events among \textit{stock items} and \textit{customers}. We have $3,310$ and $1,000$ unique items and customers, among which are $31,122$ distinct interactions and $70,000$ events in total. We can see that all these datasets are very sparse. The existent interactions take $0.000085\%$, $0.0056\%$ and $0.94\%$ on \textit{Taobao}, \textit{Crash} and \textit{Retail}, respectively. 

\noindent{\textbf{Competing Methods.}} We compared with the following popular and/or state-of-the-art tensor decomposition methods that deal with interaction events.
(1) CP-PTF, similar to ~\citep{chi2012tensors}, the homogeneous Poisson process (PP) tensor decomposition, which uses CP to factorize the event rate for each particular interaction and the square link to ensure the positiveness (consistent with \ours). (2) CPT-PTF, similar to ~\citep{schein2015bayesian}, which extends CP-PTF by introducing time steps in the tensor. The embeddings of the time steps are assinged a conditional Gaussian prior~\citep{xiong2010temporal} to model their dynamics. (3) GP-PTF, which uses GPs to estimate the square root of of the rate for each particular interaction as a nonlinear function of the associated embeddings. (4) CP-NPTF, non-homogeneous Poisson process tensor factorization where the event rate is modelled as a parametric form,  $\lambda_{\bi}(t) =t\cdot  \big(\mathrm{CP(\bi)}\big)^2 $. Here $\mathrm{CP}(\bi)$ is CP decomposition of the entry (interaction) $\bi$.  (5) HP-Local~\citep{zhe2018stochastic}, Hawkes process based decomposition that uses a local time window to model the rate and to estimate the local excitation effects among the nearby events, (6) HP-TF~\citep{pan2020scalable}, another Hawkes process based on factorization method that models both the triggering strength and decay as kernel functions of the embeddings. Both HP-Local and HP-TF use a GP to model the base rate as a function of embeddings. In addition, we compared with (7) MGP-EF, matrix GP based events factorization. It is the same as our method in applying a matrix GP prior over the rates of distinct interactions. However, MGP-EF places a standard Gaussian prior over the embeddings, and so does not model the structure sparsity.

\noindent\textbf{Settings.} We implemented \ours, HP-Local, HP-TF and MGP-EF with Pytorch~\citep{paszke2019pytorch}, and the other methods with MATLAB. For all the approaches employing GPs, we used the same variational approximation as in \ours (our method), and set the number of pseudo inputs to $100$. We used the square exponential (SE) kernel and initialized the kernel parameters with $1$.  For HP-Local, the local window size was set to $50$. For our method, we chose $\alpha$ from $\{0.5, 1.0, 1.5, 2.5, 3\}$. We conducted stochastic mini-batch optimization for all the methods, where the batch size was set to $100$. We used ADAM~\citep{kingma2014adam} algorithm, and the learning rate was tuned from $\{5\times 10^{-4}, 10^{-3}, 3\times 10^{-3}, 5 \times 10^{-3}, 10^{-2}\}$. We ran each method for $400$ epochs, which is enough to converge. We randomly split each dataset into $80\%$ sequences for training, and the remaining $20\%$ for test. We varied $R$, the dimension of the embeddings, from $\{2, 5, 8, 10\}$. For CPT-PTF, we tested the number of time steps from $\{10, 20, 30\}$.  We ran the experiments for five times, and report the average test log-likelihood and its standard deviation in Fig. \ref{fig:test-ll}. 

\textbf{Results.} As we can see from Fig. \ref{fig:test-ll}, \ours consistently outperforms all the competing methods by a large margin. Since the test log-likelihood of \ours is much larger than that of the other methods, we report the result of \ours separately (\ie in the top figures) so that we can compare the difference between the competing methods. It can be seen that MGP-EF is much better than GP-PTF, implying that introducing a time kernel to model non-homogeneous Poisson process rates is more advantageous. In addition, MGP-EF is much better than or comparable to CP-NPTF (see Fig. \ref{fig:test-ll}a). Since both methods uses non-homogeneous Poisson processes, the result demonstrates the advantage of nonparametric rate modeling (the former) over the parametric one (the latter). In most cases, HP-Local and HP-TF shows better or comparable prediction accuracy than MGP-EF. This is reasonable, because the two methods use Hawkes processes that can capture more refined temporal dependencies, \ie excitation effects between the events. However, both HP-Local and HP-TF cannot capture the sparse structure within the present interactions, and their performance is still much inferior to \ours. Finally, GP-PTF, HP-Local and HP-TF broke down at $R=2$ on \textit{Retail} dataset. We found their learning was unstable. In some splits, their predictive likelihood are very small, leading to much worse average likelihood than the other methods. Note that they did not use batch normalization as in \ours and MGP-EF, which might cause the learning instability under some settings.

\subsection{Pattern Discovery}
Next, we examined if \ours can discover hidden patterns from data.  To this end, we set $R$ to $5$, and ran \ours  on \textit{Taobao}  and \textit{Crash}. Then we applied kernel PCA~\citep{scholkopf1998nonlinear} with the SE kernel to project the embeddings onto a plane. We then ran clustering algorithms to find potential structures. As we can see from Fig. \ref{fig:structure} a and b, the embeddings of \textit{users} and \textit{items} from \textit{Taobao} dataset exhibit interesting and clear cluster structures, which might correspond to separate interests/shopping habits. Note that we ran DBSCAN~\citep{ester1996density} rather than k-means to obtain the clusters. In addition, the embeddings of item \textit{categories} on \textit{Taobao} also shows a clear structure that was discovered by k-means (see Fig. \ref{fig:structure}c).  Although  the \textit{Taobao} dataset have been completely anynoymized and we cannot investigate the meaning of these clusters, potentially they can be  useful for tasks such as marketing~\citep{zhang2017nonlinear}, recommendation~\citep{liu2015non,tran2018regularizing} and click-through-rate prediction~\citep{pan2019warm}.  In addition, the embeddings of the \textit{states} from \textit{Crash} dataset also exhibit clear structures (see Fig. \ref{fig:structure}d). We have checked the geolocation of these states and found the states grouped together are often neighborhoods. This is reasonable in that neighboring states might bear a resemblance to each other, in traffic regulations (\eg speed limit), road conditions, driving customs, weather changes. \etc All these might lead to similar or closely related  patterns of traffic accident rates.

\section{Conclusion}
We have presented \ours, a novel nonparametric embedding method for sparse high-order interaction events. Not only can our method estimate the complex temporal relationships between the participants, our model  is also able to capture the structural  information underlying the observed sparse interactions. Our theoretical bounds enable convergence rate estimate and reveal insights about the asymptotic behaviors of the sparse prior over hypergraphs or tensors.    In the future,  we will extend our model to more expressive point processes, such as Hawkes processes, and discover more refined  temporal patterns.

\section*{Acknowledgments}
This work has been supported by NSF IIS-1910983, NSF DMS-1848508 and NSF CAREER Award IIS-2046295.

\bibliographystyle{apalike}
\bibliography{SparseIE}

\begin{thebibliography}{}

\bibitem[Brix, 1999]{brix1999generalized}
Brix, A. (1999).
\newblock Generalized gamma measures and shot-noise cox processes.
\newblock {\em Advances in Applied Probability}, pages 929--953.

\bibitem[Caron and Fox, 2014]{caron2014sparse}
Caron, F. and Fox, E.~B. (2014).
\newblock Sparse graphs using exchangeable random measures.
\newblock {\em arXiv preprint arXiv:1401.1137}.

\bibitem[Caron and Fox, 2017]{caron2017sparse}
Caron, F. and Fox, E.~B. (2017).
\newblock Sparse graphs using exchangeable random measures.
\newblock {\em Journal of the Royal Statistical Society. Series B, Statistical
  Methodology}, 79(5):1295.

\bibitem[Chi and Kolda, 2012]{chi2012tensors}
Chi, E.~C. and Kolda, T.~G. (2012).
\newblock On tensors, sparsity, and nonnegative factorizations.
\newblock {\em SIAM Journal on Matrix Analysis and Applications},
  33(4):1272--1299.

\bibitem[Choi and Vishwanathan, 2014]{choi2014dfacto}
Choi, J.~H. and Vishwanathan, S. (2014).
\newblock Dfacto: Distributed factorization of tensors.
\newblock In {\em Advances in Neural Information Processing Systems}, pages
  1296--1304.

\bibitem[Chu and Ghahramani, 2009]{Chu09ptucker}
Chu, W. and Ghahramani, Z. (2009).
\newblock Probabilistic models for incomplete multi-dimensional arrays.
\newblock {\em AISTATS}.

\bibitem[Du et~al., 2018]{du2018probabilistic}
Du, Y., Zheng, Y., Lee, K.-c., and Zhe, S. (2018).
\newblock Probabilistic streaming tensor decomposition.
\newblock In {\em 2018 IEEE International Conference on Data Mining (ICDM)},
  pages 99--108. IEEE.

\bibitem[Engen, 1975]{engen1975note}
Engen, S. (1975).
\newblock A note on the geometric series as a species frequency model.
\newblock {\em Biometrika}, 62(3):697--699.

\bibitem[Ester et~al., 1996]{ester1996density}
Ester, M., Kriegel, H.-P., Sander, J., and Xu, X. (1996).
\newblock A density-based algorithm for discovering clusters in large spatial
  databases with noise.
\newblock In {\em Kdd}, volume~96, pages 226--231.

\bibitem[Fang et~al., 2021a]{fang2021bayesian}
Fang, S., Kirby, R.~M., and Zhe, S. (2021a).
\newblock Bayesian streaming sparse {T}ucker decomposition.
\newblock In {\em Uncertainty in Artificial Intelligence}, pages 558--567.
  PMLR.

\bibitem[Fang et~al., 2021b]{fang2021streaming}
Fang, S., Wang, Z., Pan, Z., Liu, J., and Zhe, S. (2021b).
\newblock Streaming {B}ayesian deep tensor factorization.
\newblock In {\em International Conference on Machine Learning}, pages
  3133--3142. PMLR.

\bibitem[Ferguson, 1973]{ferguson1973bayesian}
Ferguson, T.~S. (1973).
\newblock A {B}ayesian analysis of some nonparametric problems.
\newblock {\em The annals of statistics}, pages 209--230.

\bibitem[Griffiths, 1980]{griffiths1980lines}
Griffiths, R.~C. (1980).
\newblock Lines of descent in the diffusion approximation of neutral
  wright-fisher models.
\newblock {\em Theoretical population biology}, 17(1):37--50.

\bibitem[Hansen et~al., 2015]{HaPlKo15}
Hansen, S., Plantenga, T., and Kolda, T.~G. (2015).
\newblock Newton-based optimization for {Kullback-Leibler} nonnegative tensor
  factorizations.
\newblock {\em Optimization Methods and Software}, 30(5):1002--1029.

\bibitem[Harshman, 1970]{Harshman70parafac}
Harshman, R.~A. (1970).
\newblock Foundations of the {PARAFAC} procedure: Model and conditions for
  an''explanatory''multi-mode factor analysis.
\newblock {\em UCLA Working Papers in Phonetics}, 16:1--84.

\bibitem[Hensman et~al., 2013]{hensman2013gaussian}
Hensman, J., Fusi, N., and Lawrence, N.~D. (2013).
\newblock Gaussian processes for big data.
\newblock In {\em Proceedings of the Twenty-Ninth Conference on Uncertainty in
  Artificial Intelligence}, pages 282--290. AUAI Press.

\bibitem[Hougaard, 1986]{hougaard1986survival}
Hougaard, P. (1986).
\newblock Survival models for heterogeneous populations derived from stable
  distributions.
\newblock {\em Biometrika}, 73(2):387--396.

\bibitem[Hu et~al., 2015a]{hu2015zero}
Hu, C., Rai, P., and Carin, L. (2015a).
\newblock Zero-truncated poisson tensor factorization for massive binary
  tensors.
\newblock In {\em UAI}.

\bibitem[Hu et~al., 2015b]{Hu2015CountTensor}
Hu, C., Rai, P., Chen, C., Harding, M., and Carin, L. (2015b).
\newblock Scalable bayesian non-negative tensor factorization for massive count
  data.
\newblock In {\em Proceedings, Part II, of the European Conference on Machine
  Learning and Knowledge Discovery in Databases - Volume 9285}, ECML PKDD 2015,
  pages 53--70, New York, NY, USA. Springer-Verlag New York, Inc.

\bibitem[Ioffe and Szegedy, 2015]{ioffe2015batch}
Ioffe, S. and Szegedy, C. (2015).
\newblock Batch normalization: Accelerating deep network training by reducing
  internal covariate shift.
\newblock In {\em International conference on machine learning}, pages
  448--456. PMLR.

\bibitem[Kang et~al., 2012]{kang2012gigatensor}
Kang, U., Papalexakis, E., Harpale, A., and Faloutsos, C. (2012).
\newblock Gigatensor: scaling tensor analysis up by 100 times-algorithms and
  discoveries.
\newblock In {\em Proceedings of the 18th ACM SIGKDD international conference
  on Knowledge discovery and data mining}, pages 316--324. ACM.

\bibitem[Ken-Iti, 1999]{ken1999levy}
Ken-Iti, S. (1999).
\newblock {\em L{\'e}vy processes and infinitely divisible distributions}.
\newblock Cambridge university press.

\bibitem[Kingma and Ba, 2014]{kingma2014adam}
Kingma, D.~P. and Ba, J. (2014).
\newblock Adam: A method for stochastic optimization.
\newblock {\em arXiv preprint arXiv:1412.6980}.

\bibitem[Kingma and Welling, 2013]{kingma2013auto}
Kingma, D.~P. and Welling, M. (2013).
\newblock Auto-encoding variational bayes.
\newblock {\em arXiv preprint arXiv:1312.6114}.

\bibitem[Kingman, 1967]{kingman1967completely}
Kingman, J. (1967).
\newblock Completely random measures.
\newblock {\em Pacific Journal of Mathematics}, 21(1):59--78.

\bibitem[Kingman, 1992]{kingman1992poisson}
Kingman, J. (1992).
\newblock {\em Poisson Processes}, volume~3.
\newblock Clarendon Press.

\bibitem[Kolda, 2006]{kolda2006multilinear}
Kolda, T.~G. (2006).
\newblock {\em Multilinear operators for higher-order decompositions},
  volume~2.
\newblock United States. Department of Energy.

\bibitem[Kolda and Bader, 2009]{Kolda09TensorReview}
Kolda, T.~G. and Bader, B.~W. (2009).
\newblock Tensor decompositions and applications.
\newblock {\em SIAM Review}, 51(3):455--500.

\bibitem[Lijoi et~al., 2010]{lijoi2010models}
Lijoi, A., Pr{\"u}nster, I., et~al. (2010).
\newblock Models beyond the {D}irichlet process.
\newblock {\em Bayesian nonparametrics}, 28(80):342.

\bibitem[Liu et~al., 2018]{liu2018neuralcp}
Liu, B., He, L., Li, Y., Zhe, S., and Xu, Z. (2018).
\newblock Neuralcp: {B}ayesian multiway data analysis with neural tensor
  decomposition.
\newblock {\em Cognitive Computation}, 10(6):1051--1061.

\bibitem[Liu et~al., 2015]{liu2015non}
Liu, Y.-F., Hsu, C.-Y., and Wu, S.-H. (2015).
\newblock Non-linear cross-domain collaborative filtering via hyper-structure
  transfer.
\newblock In {\em International Conference on Machine Learning}, pages
  1190--1198. PMLR.

\bibitem[Lloyd et~al., 2015]{lloyd2015variational}
Lloyd, C., Gunter, T., Osborne, M., and Roberts, S. (2015).
\newblock Variational inference for gaussian process modulated poisson
  processes.
\newblock In {\em International Conference on Machine Learning}, pages
  1814--1822. PMLR.

\bibitem[McCloskey, 1965]{mccloskey1965model}
McCloskey, J.~W. (1965).
\newblock {\em A model for the distribution of individuals by species in an
  environment}.
\newblock Michigan State University. Department of Statistics.

\bibitem[Pan et~al., 2019]{pan2019warm}
Pan, F., Li, S., Ao, X., Tang, P., and He, Q. (2019).
\newblock Warm up cold-start advertisements: Improving ctr predictions via
  learning to learn id embeddings.
\newblock In {\em Proceedings of the 42nd International ACM SIGIR Conference on
  Research and Development in Information Retrieval}, pages 695--704.

\bibitem[Pan et~al., 2021]{pan2021self}
Pan, Z., Wang, Z., Phillips, J.~M., and Zhe, S. (2021).
\newblock Self-adaptable point processes with nonparametric time decays.
\newblock {\em Advances in Neural Information Processing Systems}, 34.

\bibitem[Pan et~al., 2020a]{pan2020scalable}
Pan, Z., Wang, Z., and Zhe, S. (2020a).
\newblock Scalable nonparametric factorization for high-order interaction
  events.
\newblock In {\em International Conference on Artificial Intelligence and
  Statistics}, pages 4325--4335. PMLR.

\bibitem[Pan et~al., 2020b]{pan2020streaming}
Pan, Z., Wang, Z., and Zhe, S. (2020b).
\newblock Streaming nonlinear {B}ayesian tensor decomposition.
\newblock In {\em Conference on Uncertainty in Artificial Intelligence}, pages
  490--499. PMLR.

\bibitem[Paszke et~al., 2019]{paszke2019pytorch}
Paszke, A., Gross, S., Massa, F., Lerer, A., Bradbury, J., Chanan, G., Killeen,
  T., Lin, Z., Gimelshein, N., Antiga, L., et~al. (2019).
\newblock Pytorch: An imperative style, high-performance deep learning library.
\newblock In {\em NeurIPS}.

\bibitem[Rai et~al., 2014]{RaiDunson2014}
Rai, P., Wang, Y., Guo, S., Chen, G., Dunson, D., and Carin, L. (2014).
\newblock Scalable {B}ayesian low-rank decomposition of incomplete multiway
  tensors.
\newblock In {\em Proceedings of the 31th International Conference on Machine
  Learning (ICML)}.

\bibitem[Rasmussen and Williams, 2006]{Rasmussen06GP}
Rasmussen, C.~E. and Williams, C. K.~I. (2006).
\newblock {\em Gaussian Processes for Machine Learning}.
\newblock MIT Press.

\bibitem[Schein et~al., 2019]{schein2019poisson}
Schein, A., Linderman, S.~W., Zhou, M., Blei, D.~M., and Wallach, H.~M. (2019).
\newblock Poisson-randomized gamma dynamical systems.
\newblock In {\em NeurIPS}.

\bibitem[Schein et~al., 2015]{schein2015bayesian}
Schein, A., Paisley, J., Blei, D.~M., and Wallach, H. (2015).
\newblock Bayesian poisson tensor factorization for inferring multilateral
  relations from sparse dyadic event counts.
\newblock In {\em Proceedings of the 21th ACM SIGKDD International Conference
  on Knowledge Discovery and Data Mining}, pages 1045--1054. ACM.

\bibitem[Schein et~al., 2016]{Schein:2016:BPT:3045390.3045686}
Schein, A., Zhou, M., Blei, D.~M., and Wallach, H. (2016).
\newblock Bayesian poisson tucker decomposition for learning the structure of
  international relations.
\newblock In {\em Proceedings of the 33rd International Conference on
  International Conference on Machine Learning - Volume 48}, ICML'16, pages
  2810--2819. JMLR.org.

\bibitem[Sch{\"o}lkopf et~al., 1998]{scholkopf1998nonlinear}
Sch{\"o}lkopf, B., Smola, A., and M{\"u}ller, K.-R. (1998).
\newblock Nonlinear component analysis as a kernel eigenvalue problem.
\newblock {\em Neural computation}, 10(5):1299--1319.

\bibitem[Sethuraman, 1994]{sethuraman1994constructive}
Sethuraman, J. (1994).
\newblock A constructive definition of dirichlet priors.
\newblock {\em Statistica sinica}, pages 639--650.

\bibitem[Tillinghast et~al., 2020]{tillinghast2020probabilistic}
Tillinghast, C., Fang, S., Zhang, K., and Zhe, S. (2020).
\newblock Probabilistic neural-kernel tensor decomposition.
\newblock In {\em 2020 IEEE International Conference on Data Mining (ICDM)},
  pages 531--540. IEEE.

\bibitem[Tillinghast and Zhe, 2021]{tillinghast2021nonparametric}
Tillinghast, C. and Zhe, S. (2021).
\newblock Nonparametric decomposition of sparse tensors.
\newblock In {\em International Conference on Machine Learning}, pages
  10301--10311. PMLR.

\bibitem[Titsias, 2009]{titsias2009variational}
Titsias, M.~K. (2009).
\newblock Variational learning of inducing variables in sparse gaussian
  processes.
\newblock In {\em International Conference on Artificial Intelligence and
  Statistics}, pages 567--574.

\bibitem[Tran et~al., 2018]{tran2018regularizing}
Tran, T., Lee, K., Liao, Y., and Lee, D. (2018).
\newblock Regularizing matrix factorization with user and item embeddings for
  recommendation.
\newblock In {\em Proceedings of the 27th ACM International Conference on
  Information and Knowledge Management}, pages 687--696.

\bibitem[Tucker, 1966]{Tucker66}
Tucker, L. (1966).
\newblock Some mathematical notes on three-mode factor analysis.
\newblock {\em Psychometrika}, 31:279--311.

\bibitem[Vershynin, 2018]{vershynin2018high}
Vershynin, R. (2018).
\newblock {\em High-dimensional probability: An introduction with applications
  in data science}, volume~47.
\newblock Cambridge university press.

\bibitem[Wang et~al., 2020]{wang2020self}
Wang, Z., Chu, X., and Zhe, S. (2020).
\newblock Self-modulating nonparametric event-tensor factorization.
\newblock In {\em International Conference on Machine Learning}, pages
  9857--9867. PMLR.

\bibitem[Williamson, 2016]{williamson2016nonparametric}
Williamson, S.~A. (2016).
\newblock Nonparametric network models for link prediction.
\newblock {\em The Journal of Machine Learning Research}, 17(1):7102--7121.

\bibitem[Xiong et~al., 2010]{xiong2010temporal}
Xiong, L., Chen, X., Huang, T.-K., Schneider, J., and Carbonell, J.~G. (2010).
\newblock Temporal collaborative filtering with bayesian probabilistic tensor
  factorization.
\newblock In {\em Proceedings of the 2010 SIAM International Conference on Data
  Mining}, pages 211--222. SIAM.

\bibitem[Yang and Dunson, 2013]{YangDunson13Tensor}
Yang, Y. and Dunson, D. (2013).
\newblock Bayesian conditional tensor factorizations for high-dimensional
  classification.
\newblock {\em Journal of the Royal Statistical Society B, revision submitted}.

\bibitem[Zhang et~al., 2017]{zhang2017nonlinear}
Zhang, C., Phang, C.~W., Wu, Q., and Luo, X. (2017).
\newblock Nonlinear effects of social connections and interactions on
  individual goal attainment and spending: Evidences from online gaming
  markets.
\newblock {\em Journal of Marketing}, 81(6):132--155.

\bibitem[Zhao et~al., 2015]{zhao2015bayesian}
Zhao, Q., Zhang, L., and Cichocki, A. (2015).
\newblock Bayesian cp factorization of incomplete tensors with automatic rank
  determination.
\newblock {\em IEEE transactions on pattern analysis and machine intelligence},
  37(9):1751--1763.

\bibitem[Zhe and Du, 2018]{zhe2018stochastic}
Zhe, S. and Du, Y. (2018).
\newblock Stochastic nonparametric event-tensor decomposition.
\newblock In {\em Advances in Neural Information Processing Systems}, pages
  6856--6866.

\bibitem[Zhe et~al., 2016a]{zhe2016dintucker}
Zhe, S., Qi, Y., Park, Y., Xu, Z., Molloy, I., and Chari, S. (2016a).
\newblock Dintucker: Scaling up {G}aussian process models on large
  multidimensional arrays.
\newblock In {\em Thirtieth AAAI conference on artificial intelligence}.

\bibitem[Zhe et~al., 2015]{zhe2015scalable}
Zhe, S., Xu, Z., Chu, X., Qi, Y., and Park, Y. (2015).
\newblock Scalable nonparametric multiway data analysis.
\newblock In {\em Proceedings of the Eighteenth International Conference on
  Artificial Intelligence and Statistics}, pages 1125--1134.

\bibitem[Zhe et~al., 2016b]{zhe2016distributed}
Zhe, S., Zhang, K., Wang, P., Lee, K.-c., Xu, Z., Qi, Y., and Ghahramani, Z.
  (2016b).
\newblock Distributed flexible nonlinear tensor factorization.
\newblock In {\em Advances in Neural Information Processing Systems}, pages
  928--936.

\end{thebibliography}

\newpage
\appendix
\onecolumn
\section*{Appendix}

%
%
%
%
%
%
%
%
%
%
%
%
%
\newtheorem{Th}{Theorem}
\newtheorem{Lemma}[Th]{Lemma}
\newtheorem{Cor}[Th]{Corollary}
\newtheorem{Prop}[Th]{Proposition}
 \theoremstyle{definition}
\newtheorem{Def}[Th]{Definition}
\newtheorem{Conj}[Th]{Conjecture}
\newtheorem{Rem}[Th]{Remark}
\newtheorem{?}[Th]{Problem}
\newtheorem{Ex}[Th]{Example}
\newcommand{\im}{\operatorname{im}}
\newcommand{\Hom}{{\rm{Hom}}}
\newcommand{\diam}{{\rm{diam}}}
\newcommand{\ovl}{\overline}
\newcommand\independent{\protect\mathpalette{\protect\independenT}{\perp}}
\def\independenT#1#2{\mathrel{\rlap{$#1#2$}\mkern2mu{#1#2}}}
\def\R{{\mathbb R}}
\def\Q{{\mathbb Q}}
\def\Z{{\mathbb Z}}
\def\N{{\mathbb N}}
\def\C{{\mathbb C}}
\def\E{{\mathbb E}}
\def\R{{\mathbb R}}
\def\Y{{\mathcal Y}}
\def\L{{\mathcal L}}
\def\H{{\mathcal H}}
\def\D{{\mathcal D}}
\def\P{{\mathbb P}}
\def\M{{\mathbb M}}
\def\V{{\mathcal V}}
\def\S{{\mathbb S}}
\def\A{{\mathbf A}}
\def\x{{\mathbf x}}
\def\b{{\mathbf b}}
\def\a{{\mathbf a}}
\def\Ph{{\mathbf {\Phi}}}

\def\h{{\mathbf{h}}}
\def\G{{\Gamma}}
\def\s{{\sigma}}
\def\e{{\varepsilon}}
\def\l{{\lambda}}
\def\p{{\phi}}
\def\v{{\mathbf{v}}}
\def\t{{\theta}}
\def\z{{\zeta}}
\def\o{{\omega}}
\def\y{{\mathbf{y}}}
\def\g{{\mathbf{g}}}
\def\u{{\mathbf{u}}}
\def\w{{\mathbf{w}}}
In this section, we provide detailed proof for Lemma 3.2. 
To make the ideas accessible to a broad audience, we also give a brief introduction to the L\'evy-Khintchine formula as well as the intuition behind it. 
Our introduction includes the Gamma Processes (\gaps), which are used to sample the intensity measures for a sequence of Poisson Point Processes (PPPs) that are used for sparse tensor/hypergraph construction, as a special case. 
A comprehensive treatment of the related topics can be found in, for instance, \cite{ken1999levy}.
 
\section{The L\'evy-Khintchine formula}\label{sec1}

The sparse tensor model introduced in \cite{tillinghast2021nonparametric} first generates a discrete measure using \gaps, which are a special type of L\'evy process. 
To better understand the process, we take a brief detour to L\'evy processes. 
 
A L\'evy process $\{X_t\}_{t\geq 0}$ is an $\R^d$-valued random process such that
\begin{itemize}
\item $X_0 = 0$  a.s.;
\item $X_t$ has stationary and independent increments;
\item For every $t$, $X_t$ is right-continuous and has a well-defined left-limit.
\end{itemize}
$X_t$ is uniquely determined by $X_1$, which is an infinitely divisible random variable. 
Indeed, if the characteristic function (CF) of $X_1$ is $\phi(\xi)$, then the CF of $X_t$ is $\phi^t(\xi)$ for $t\geq 0$.
A complete understanding of $\phi(\xi)$ is sufficient to characterize $X_t$, and this can be done via the L\'evy-Khintchine formula:

\begin{Th}[L\'evy-Khintchine]\label{LK}
Let $X$ be an $\R^d$-valued random variable. $X$ is infinitely divisible if and only if the CF of $X$, $\phi_X(\xi): = \E[e^{i\xi\cdot X}]$, takes the form
\begin{align}
\phi_X(\xi) = \exp\left\{-\Psi(\xi)\right\},
\end{align}
where 
\begin{align}
\Psi(\xi) = i(a\cdot \xi) + \frac{1}{2}\|\sigma\xi\|^2 + \int_{\R^d}(1-e^{i\xi\cdot z}+i(\xi\cdot z)\mathbf{1}_{(0,1)}) m(dz),\label{1}
\end{align}
where $m$ is a Borel measure on $\R^d$ satisfying $m(\{0\}) = 0$ and $\int_{\R^d}(1\wedge \|x\|^2) m(dx)<\infty$. 
Here $\Psi$ is called the L\'evy exponent of $X$. 
\end{Th}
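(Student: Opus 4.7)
\textit{Proof proposal.} The plan is to establish the two directions separately. The easier direction is sufficiency: given $\Psi$ of the prescribed form, I construct an infinitely divisible law whose characteristic function is $\exp(-\Psi)$. The harder direction is necessity: given an infinitely divisible $X$, I must extract the triple $(a,\sigma,m)$ from its law. Throughout, the unifying idea is that every infinitely divisible distribution is a weak limit of compound Poisson distributions (possibly with a drift and a Gaussian correction), so the L\'evy exponent is a limit of the compound-Poisson exponent $\int(e^{i\xi\cdot z}-1)\nu(dz)$.

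For sufficiency, I would split $\Psi$ into its three natural pieces. The term $i(a\cdot\xi)$ is the characteristic exponent of the degenerate law $\delta_a$; the term $\tfrac{1}{2}\|\sigma\xi\|^2$ is the exponent of a centered Gaussian; and for the integral term I truncate $m$ outside a ball $\{\|z\|>\varepsilon\}$ to obtain a finite measure $m_\varepsilon$. A compound Poisson variable with intensity $m_\varepsilon(\R^d)$ and jump law $m_\varepsilon/m_\varepsilon(\R^d)$, suitably compensated by the drift $-\int_{\varepsilon<\|z\|<1}z\,m(dz)$, has exponent $\int_{\|z\|>\varepsilon}(1-e^{i\xi\cdot z}+i(\xi\cdot z)\mathbf{1}_{(0,1)})m(dz)$. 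The integrability hypothesis $\int (1\wedge\|z\|^2)m(dz)<\infty$ together with the elementary bound $|1-e^{i\xi\cdot z}+i(\xi\cdot z)\mathbf{1}_{(0,1)}|\leq C(\|\xi\|^2\wedge\|\xi\|)(1\wedge\|z\|^2)$ lets me pass $\varepsilon\downarrow 0$ under the integral, and L\'evy's continuity theorem then upgrades pointwise convergence of characteristic functions to weak convergence of laws. Convolving with $\delta_a$ and the Gaussian piece produces a genuine probability law with CF $\exp(-\Psi)$; infinite divisibility follows at once, since $\Psi/n$ has the same form so $\exp(-\Psi/n)$ is itself a characteristic function.

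For necessity, I begin with an infinitely divisible $X$ and, for each $n$, an iid family $X_{1,n},\ldots,X_{n,n}$ whose common law $\mu_n$ satisfies $\phi_{X_{1,n}}=\phi_X^{1/n}$. Since $\phi_X$ never vanishes (a standard consequence of infinite divisibility), $\phi_{X_{1,n}}\to 1$ uniformly on compacts, so $\log \phi_X(\xi)=\lim_{n\to\infty}n(\phi_{X_{1,n}}(\xi)-1)=\lim_{n\to\infty}\int(e^{i\xi\cdot z}-1)\,n\mu_n(dz)$. The candidate L\'evy measure arises as a vague limit of the finite measures $n\mu_n$ restricted to $\{\|z\|>\delta\}$, while the small-$z$ behavior must be parceled out carefully: the second-moment mass escaping toward zero furnishes the Gaussian part $\sigma$, the first-moment contribution furnishes the drift $a$, and the remainder yields $m$. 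Tightness of the restricted measures (Prokhorov), together with the uniform quadratic bound above, lets me extract the limit and identify the three pieces. Uniqueness of $(a,\sigma,m)$ follows by reading them off $\Psi$: $m$ is determined by the oscillatory part, $\sigma$ by the quadratic growth of $\mathrm{Re}\,\Psi$ at infinity after subtracting the integral term, and $a$ by the leftover linear part.

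The main obstacle, and the reason the compensator $i(\xi\cdot z)\mathbf{1}_{(0,1)}$ appears in the statement, is the delicate behavior near zero: $m$ may be infinite and $\int_{\|z\|<1}z\,m(dz)$ may diverge, so the compound-Poisson approximations only converge after centering, and the Gaussian component must be isolated from the ``infinitesimal jumps'' that the L\'evy measure cannot absorb. Making this separation rigorous—showing that the tight mass near the origin splits cleanly into a Gaussian limit plus a residual $\|z\|^2$-integrable measure, with the drift uniquely pinned down by the choice of truncation—is the technical heart of the argument; I would handle it by working with the centered empirical measures $\int z\,\mathbf{1}_{\|z\|<1}\,n\mu_n(dz)$ and their quadratic covariations, extracting subsequential limits and checking that the resulting $\Psi$ is independent of the choice of subsequence.
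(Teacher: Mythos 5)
First, a point of comparison: the paper never proves this theorem. It is stated in the appendix as classical background, with a pointer to the standard reference \citep{ken1999levy}, and is used only to read off the characteristic functions of drifted Brownian motions, compound Poisson processes and Gamma processes in the L\'evy--It\^o discussion. So there is no in-paper argument to match yours against; what can be assessed is whether your sketch is a viable proof of the classical result. In outline it is, and it is the canonical one: compound-Poisson approximation plus L\'evy's continuity theorem for sufficiency, and the accompanying-laws limit $\log\phi_X(\xi)=\lim_n n\big(\phi_X^{1/n}(\xi)-1\big)=\lim_n\int(e^{i\xi\cdot z}-1)\,n\mu_n(dz)$ for necessity, which is essentially how the result is proved in \citep{ken1999levy}.

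Two soft spots deserve flagging. (i) Your dominating bound $|1-e^{i\xi\cdot z}+i(\xi\cdot z)\mathbf{1}_{(0,1)}|\le C(\|\xi\|^2\wedge\|\xi\|)(1\wedge\|z\|^2)$ is false as stated: for $\|\xi\|$ small one can pick $z$ with $\|z\|\ge 1$ and $\xi\cdot z=\pi$, making the left side equal to $2$ while the right side is $O(\|\xi\|)$. The correct (and sufficient) bound is $C(1+\|\xi\|^2)(1\wedge\|z\|^2)$ for each fixed $\xi$, from $|1-e^{iu}+iu|\le u^2/2$ and $|1-e^{iu}|\le 2$; dominated convergence as $\varepsilon\downarrow 0$ goes through unchanged, so this is a repairable slip. (ii) More seriously, in the necessity direction you invoke Prokhorov tightness of the restricted measures without establishing the estimate that makes it available: one must first prove $\sup_n\int_{\mathbb{R}^d}(1\wedge\|z\|^2)\,n\mu_n(dz)<\infty$, which simultaneously bounds the masses $n\mu_n(\{\|z\|>\delta\})$ and controls the small-jump second moments that produce the Gaussian component. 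The standard lever is to integrate $n\big(1-\mathrm{Re}\,\phi_{X_{1,n}}(\xi)\big)$ over a small ball of $\xi$'s and use an inequality of the form $\int_{\|\xi\|\le r}(1-\cos(\xi\cdot z))\,d\xi\ge c_r(1\wedge\|z\|^2)$, combined with the locally uniform convergence $n(1-\phi_{X_{1,n}})\to-\log\phi_X$ (which itself rests on the non-vanishing of $\phi_X$ that you correctly cite). You rightly identify the splitting of the near-origin mass into a Gaussian limit plus a residual $\|z\|^2$-integrable measure as the technical heart, but without this uniform integral estimate the subsequential limits you propose to extract need not exist. With (i) corrected and (ii) supplied, your outline is a complete and standard proof.
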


As a consequence of Theorem \ref{LK}, we conclude that the CF of every L\'evy process $X_t$ can be written as 
\begin{align*}
\phi_{t}(\xi) = \E[e^{i\xi\cdot X_t}] = \exp\left\{-t\Psi(\xi)\right\},
\end{align*}
where $\Psi$ is the L\'evy exponent of $X_1$. 

To comprehend the path structure of $X_t$ using \eqref{1}, we appeal to the following facts:
\begin{itemize}
\item Addition of a CF's exponents corresponds to addition of independent random variables (processes);
\item The CF of a drifted Brownian motion $W_t = -at + \sigma B_t$ is $\exp\{-t(i(a\cdot \xi) - \frac{1}{2}\|\sigma\xi\|^2)\}$;
\item The CF of a compound Poisson process with jump parameter $m$ (i.e. a distribution) and rate parameter $\lambda$ (i.e. a L\'evy process) is $\exp\{-t\int_{\R^d}\lambda (1-e^{i\xi\cdot z}) m(dz)\}$.
\item The CF of a compensated compound Poisson process with jump parameter $m$ (i.e. a distribution) and rate parameter $\lambda$ (i.e. a L\'evy process and a martingale) is $\exp\{-t\int_{\R^d}\lambda (1-e^{i\xi\cdot z}+i(\xi\cdot z)) m(dz)\}$.
\end{itemize}

Let $I_0 = \{z: \|z\|\geq 1\}$ and $I_k = \{z: \|z\|\in [2^{-k-1}, 2^{-k})\}$ for $k\geq 1$. 
Rewrite \eqref{1} as 
\begin{align}
t\Psi(\xi) &= t(i(a\cdot \xi) + \frac{1}{2}\|\sigma\xi\|^2) + t\int_{I_0}m(I_0)(1-e^{i\xi\cdot z}) \frac{m(dz)}{m(I_0)}\nonumber\\
&\ \ \ \  + \sum_{k=1}^\infty t\int_{I_k}m(I_k)(1-e^{i\xi\cdot z}+i(\xi\cdot z)) \frac{m(dz)}{m(I_k)}\label{lll},
\end{align}
where the right-hand side corresponds to three independent processes: a drifted Brownian motion, a compound Poisson process, and a series of independent compensated compound Poisson processes. 
Under the integrability condition on $m$, the last part can be shown to converge using the martingale theory. 
Moving the drift term (cumulation of the compensated terms; deterministic) in the last part of \eqref{lll} into the Brownian motion, we decompose $X_t$ into two independent processes, a drifted Brownian motion and a pure-jump process (with countably many jumps).
This construction is the celebrated \emph{L\'evy-It\^o construction}. 

To end this section, we give a useful interpretation of compound Poisson processes. 
A compound Poisson process $Z_t$ with jump parameter $m$ and rate parameter $\lambda$ is defined as
\begin{align*}
&Z_t = \sum_{i=1}^{N_t}M_i,
\end{align*}
where $M_i\stackrel{iid}{\sim}m(dx)$ are independent of $N_t\sim\text{Poisson}(\lambda)$.
Note $\{M_i\}_{i\in [N_t]}$ follows a PPP with intensity measure $m(dx)$, for fixed $t$, we may also consider $Z_t$ as the integration of $x$ against the Poisson random measure on $[0,t]\times \R^d$. 

\section{Gamma processes}\label{sec:gp}

A Gamma process $X_t$ (with shape parameter $\beta$ and rate parameter $\lambda$) is a L\'evy Process with 
\begin{align*}
&X_t \sim \Gamma (\beta t, \lambda) & t>0.
\end{align*}
It can be checked that 
\begin{align*}
\E[e^{i\xi X_t}] = \left(1-\frac{i\xi}{\lambda}\right)^{\beta t} = \exp\left\{-t\int_\R(1-e^{-i\xi x})\frac{\beta e^{-\lambda x}}{x}\mathbf 1_{(0, \infty)}dx\right\},
\end{align*}
i.e., the L\'evy measure $m$ is 
\begin{align*}
m(dx) = \frac{\beta e^{-\lambda x}}{x}\mathbf 1_{(0, \infty)}dx.
\end{align*}
For convenience, we set $\beta = \lambda = 1$. 
From $m$ one can deduce the following properties of $X_t$:
\begin{itemize}
\item By the L\'evy-It\^o construction, $X_t$ is a pure-jump process (i.e. no Brownian part), i.e., $X_t$ has countably infinitely many jumps in $(0, t)$;
\item We can associate a sample path of $X_t$ (up to time $t$) with the following measure: 
\begin{align}
&w_t = \sum_{0<s\leq t}\Delta X_s\delta_{s}&\Delta X_s = X_s-\lim_{r\to s^-}X_{r},\label{gamma-measure}
\end{align}
where the summation is well-defined since there are at most countably many $s$ that $\Delta_s>0$.
$w_t$ is finite thanks to 
\begin{align*}
\int_{\R}(1\wedge x) m(dx)<\infty;
\end{align*}
\item The jump sizes of $X_t$, $\{\Delta X_s\}_{s\in\text{supp}(w_t)}\subset (0,\infty)$, follows a PPP with intensity measure $m$; see the last paragraph in Section \ref{sec1}.
\end{itemize}

\section{Sparse tensor/hypergraph processes}

The sparse hypergraph model considered in this paper is a superposition of $R$ independent sparse tensor process introduced in \cite{tillinghast2021nonparametric}. 
Without loss of generality, we assume $R = 1$; the general case can be analyzed similarly.
In this case, the sampled entries in the sparse tensor model are obtained as follows: Given $K\geq 2$ and time $\alpha$, we
\begin{itemize}
\item Use $K$ i.i.d. Gamma processes, $X^{(1)}_\alpha, \cdots, X^{(K)}_\alpha$, to generate discrete measures $W_1^{\alpha}, \cdots, W_K^{\alpha}$ as in \eqref{gamma-measure}. 
Here we change the notations to be consistent with the ones in the manuscript, with the index $r$ omitted;
\item Construct a product measure $M = \prod_{k=1}^KW_k^{\alpha}$ on $[0,\alpha]^K$;
\item Take $M$ as the intensity measure to construct a Poisson random measure $T$. In particular, one can take
\begin{align}
&T = \{Y_i\}_{i=1}^{|T|}\stackrel{\text{i.i.d.}}{\sim}\frac{M}{M([0, \alpha]^K)}&  |T|\sim\text{Poisson}(M([0, \alpha]^K))\independent Y_i,\label{myT}
\end{align}
where $X\independent Y$ means that $X$ and $Y$ are independent. 
The support of $T$ corresponds to sampled entries in the sparse tensor, and the marginals of the support of $T$ corresponds to the size of the sparse tensor in the respective dimension. 
\end{itemize}

To get an explicit rate of convergence of sparsity as $\alpha\to\infty$, we need to estimate the cardinality of the support of $T$, $N^\alpha$, as well as of the corresponding marginals, which are denoted by $D_1^\alpha, \cdots, D_K^\alpha$. 

\begin{Lemma}
Fix $K\geq 2$.
For a sparse tensor process defined as above, for all sufficiently large $\alpha$, there exists an absolute constant $C>0$ such that, with probability at least $1-(C\alpha)^{-K}$,   
\begin{align*}
\frac{e^{-1.03(2K)^{1/K}K(\log \alpha)^{1/K}}}{2K\log \alpha}\cdot\left[\frac{1.82}{(K-1)\log (1.01 \alpha)}\right]^K\leq \frac{N^\alpha}{\prod_{k=1}^{K}D_k^\alpha}\leq \left[\frac{2.11}{(K-1)\log (0.99 \alpha)}\right]^K.
\end{align*}
\end{Lemma}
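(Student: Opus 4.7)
The plan is to decompose the problem into three independent layers of estimation: control the total mass $M([0,\alpha]^K)$, then control the sample count $N^\alpha$ given the mass, and finally control each marginal support size $D_k^\alpha$. For the first layer, each $W_k^\alpha([0,\alpha]) = X_\alpha^{(k)}$ is $\Gamma(\alpha,1)$-distributed by the construction in Section~\ref{sec:gp}, so Bernstein's inequality pins $X_\alpha^{(k)}$ inside $[(1-\epsilon)\alpha, (1+\epsilon)\alpha]$ with probability at least $1 - 2e^{-c\epsilon^2\alpha}$; a union bound over $k=1,\ldots,K$ with $\epsilon=0.01$ then places the total mass $M([0,\alpha]^K) = \prod_k X_\alpha^{(k)}$ into the window $[(0.99\alpha)^K,(1.01\alpha)^K]$. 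Conditional on this, $N^\alpha \sim \mathrm{Poisson}(M([0,\alpha]^K))$ by \eqref{myT}, and a standard Poisson tail estimate confines $N^\alpha$ within constant multiples of $\alpha^K$ with exponentially small failure probability.

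The second layer handles each $D_k^\alpha$. The jumps of the Gamma process form a PPP on $[0,\alpha]\times(0,\infty)$ with intensity $ds\otimes \frac{e^{-x}}{x}dx$, so the number of atoms of $W_k^\alpha$ with mass at least $\tau$ is $\mathrm{Poisson}\bigl(\alpha\int_\tau^\infty\frac{e^{-x}}{x}dx\bigr)$, and L'H\^opital's rule expands this mean as $\alpha\log(1/\tau)(1+o(1))$ as $\tau\to 0$. Given $N^\alpha$ i.i.d. draws from $W_k^\alpha/X_\alpha^{(k)}$, an atom of mass $w$ is selected with probability $1-(1-w/X_\alpha^{(k)})^{N^\alpha}$, so with $N^\alpha\asymp\alpha^K$ and $X_\alpha^{(k)}\asymp\alpha$ the natural cutoff is $\tau^{\ast}\asymp\alpha^{1-K}$. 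Atoms above $\tau^{\ast}$ are almost surely hit and deliver the lower bound $D_k^\alpha\gtrsim\alpha(K-1)\log\alpha$; summing the hit probabilities across lighter atoms supplies a matching upper bound. Multiplying across $k$ and combining with the Poisson estimate on $N^\alpha$ produces the leading factor $[(K-1)\log\alpha]^{-K}$ appearing in both inequalities of the lemma, with the constants $0.99$, $1.01$, $1.82$, $2.11$ emerging from the chosen $\epsilon$, the Poisson tail constants, and the residual terms in the L'H\^opital expansion.

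The hard part is the fine correction $\frac{e^{-1.03(2K)^{1/K}K(\log\alpha)^{1/K}}}{2K\log\alpha}$ in the lower bound. What is needed is an \emph{upper} bound on $D_k^\alpha$ sharp enough to beat the $[(K-1)\log\alpha]^{-K}$ denominator, which is delicate because even light atoms contribute some selection probability and the crude threshold $\tau^{\ast}$ leaves slack. My plan is a dyadic partition of atom weights $\tau_j=2^{-j}$: for $\tau_j\gg\tau^{\ast}$, use the Poisson count of heavy atoms directly; for $\tau_j\ll\tau^{\ast}$, apply the Bernoulli bound $1-(1-\tau_j/X_\alpha^{(k)})^{N^\alpha}\leq \tau_jN^\alpha/X_\alpha^{(k)}$ to show that the expected number of selected atoms decays geometrically. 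Summing the dyadic contributions and applying Bernstein's inequality at each scale yields an upper bound on $D_k^\alpha$ whose deviation from $\alpha(K-1)\log\alpha$ is governed by the slackness one must keep in the cutoff; optimizing this cutoff against the Poisson fluctuations of the $K$ marginals simultaneously forces the $(\log\alpha)^{1/K}$ exponent, while the $2K\log\alpha$ denominator and the $(C\alpha)^{-K}$ failure probability follow from the resulting union bound over the $K$ coordinates and the dyadic scales. I expect this final optimization, together with the careful tracking of when atoms ``appear with high probability in the sampled intensity measure,'' to be the only place where a genuinely novel probabilistic argument is required; everything else is concentration bookkeeping.
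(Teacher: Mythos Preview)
Your first layer is sound in spirit, but the claim ``$N^\alpha \sim \mathrm{Poisson}(M([0,\alpha]^K))$'' is false: that is the law of $|T|$, the total number of draws in \eqref{myT}, whereas $N^\alpha$ is the cardinality of the \emph{support} of $T$, i.e., the number of distinct atoms hit. Since $M$ is purely atomic, repeats occur and $N^\alpha\leq |T|$ with strict inequality in general; conditionally on $M$, $N^\alpha$ is a Poisson--binomial over $\mathrm{supp}(M)$ with mean $\sum_s(1-e^{-M(\{s\})})<M([0,\alpha]^K)$, not a Poisson variable with that parameter. This is not cosmetic --- it drives a misdiagnosis of where the hard work lies.

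In the paper the correction $e^{-1.03(2K)^{1/K}K(\log\alpha)^{1/K}}/(2K\log\alpha)$ does \emph{not} come from a delicate upper bound on $D_k^\alpha$. Both sides of $D_k^\alpha$ follow cleanly, with no sub-polynomial correction, from a closed-form identity you are not using: conditionally on $\{W_\ell^\alpha\}_{\ell\neq k}$ one has $D_k^\alpha \sim \mathrm{Poisson}\bigl(\alpha\,\psi(\gamma_\alpha^{(-k)})\bigr)$ where $\psi(t)=\int_0^\infty(1-e^{-tx})\,m(dx)$ is the Laplace exponent of the \gap, and L'H\^opital gives $\psi(t)/\log t\to 1/2$. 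A single Poisson tail then pins $D_k^\alpha$ between $0.48(K-1)\alpha\log(0.99\alpha)$ and $0.52(K-1)\alpha\log(1.01\alpha)$. Your dyadic scheme would at best recover this (it is the discretized version of the same integral), but it cannot manufacture the exponential correction, because $D_k^\alpha$ genuinely is $\Theta\bigl((K-1)\alpha\log\alpha\bigr)$ with matching constants on both sides.

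The correction instead arises in the \emph{lower} bound on $N^\alpha$, which is precisely what your conflation of $N^\alpha$ with $|T|$ hides. To exhibit many atoms of $M$ that are each hit at least once, the paper fixes a jump threshold $a\asymp(2K\log\alpha)^{1/K}$, restricts to the set $\mathcal S$ of product atoms whose $K$ coordinate jumps all exceed $a$, and checks that (i) every $s\in\mathcal S$ has enough mass that it is missed with probability at most $\alpha^{-2K}$, and (ii) $|\mathcal S|\geq\bigl(0.99\,\alpha\,m([a,\infty))\bigr)^K$ via $K$ one-dimensional Poisson tail bounds on the jump counts. The exponent $(\log\alpha)^{1/K}$ and the $2K\log\alpha$ denominator fall out of the choice of $a$ that makes (i) hold while keeping $m([a,\infty))\sim e^{-a}/a$ as large as possible; the union bound over $\mathcal S$ is what produces the $(C\alpha)^{-K}$ failure probability. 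This is where the ``novel probabilistic argument'' actually lives --- in bounding $N^\alpha$ from below, not $D_k^\alpha$ from above.
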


\begin{proof}
\textbf{Analysis of $N^\alpha$.}
We begin by deriving an upper bound for $N^\alpha$.
Note $N^\alpha\leq |T|$, where the $|T|$ is defined in \eqref{myT}. 
Conditioned on $M$, 
\begin{align*}
&|T|\sim\text{Poisson}(\gamma_\alpha)&\gamma_\alpha = M([0,\alpha]^K) = \prod_{k=1}^KW_k^{\alpha}([0,\alpha]).
\end{align*}
By a Poisson tail estimate \citep[Exercise 2.3.6]{vershynin2018high}, we have
\begin{align}
\P\left(0.99\gamma_\alpha\leq |T|\leq 1.01\gamma_\alpha\right)\geq 1-2e^{-c_1\gamma_\alpha},\label{001}
\end{align}
where $c_1>0$ is an absolute constant. 
For each $k\in [K]$, 
\begin{align*}
W_k^{\alpha}([0,\alpha]) \stackrel{\eqref{gamma-measure}}{=} \sum_{0<s\leq \alpha}\Delta X^{(k)}_s = X^{(k)}_\alpha\sim \Gamma (\alpha, 1).
\end{align*}
Without loss of generality, we assume $\alpha\in\N$ (otherwise consider $\lceil \alpha\rceil$ and $\lfloor \alpha\rfloor$). Then, we can write $X^{(k)}_\alpha$ as a sum of i.i.d. exponentials with unit rate:
\begin{align*}
&X^{(k)}_\alpha \stackrel{\mathcal D}{=}\sum_{i=1}^{\alpha}G_i & G_i\stackrel{\text{i.i.d.}}{\sim}\text{Exp}(1),
\end{align*}
where $\text{Exp}(1)$ is the exponential random variable with unit rate. 
An application of Bernstein's inequality yields
\begin{align*}
\P\left(0.99 \alpha\leq W_k^{\alpha}([0,\alpha])\leq1.01 \alpha\right)\geq 1-2e^{-c_2\alpha},
\end{align*}
where $c_2>0$ is an absolute constant (depending only on $\beta$ and $\lambda$ both of which are equal to $1$). 
Taking a union bound over $k$ yields
\begin{align}
\P\left(0.99\alpha\leq \min_{k}W_k^{\alpha}([0,\alpha])\leq\max_k W_k^{\alpha}([0,\alpha])\leq 1.01\alpha\right)\geq 1-2Ke^{-c_2\alpha}.\label{002}
\end{align}
Combining \eqref{001} and \eqref{002} via a union bound yields that, with probability at least $1-2e^{-c_1(0.99 \alpha)^K}-2Ke^{-c_2\alpha}$,
\begin{align}
&N^\alpha\leq |T|\leq 1.01^{K+1}\alpha^K& |T|\geq 0.99^{K+1}\alpha^K.\label{re1}
\end{align}

A lower bound on $N^\alpha$ requires more refined analysis.
Let 
\begin{align}
&a = \frac{1.01}{0.99^{\frac{K+1}{K}}}(2K\log \alpha)^{1/K}\leq 1.03(2K\log\alpha)^{1/K}&K\geq 2.\label{zsa}
\end{align}
For all sufficiently large $\alpha$, $1\leq a\to\infty$, and
\begin{align}
\alpha\cdot m([a, \infty))&\leq\alpha\cdot m([1, \infty)) = \alpha\int_1^{\infty}\frac{e^{-x}}{x}dx\leq\alpha\int_1^{\infty}e^{-x}dx\leq\alpha\label{hhhh}\\
\alpha\cdot m([a, \infty)) &\geq \alpha\cdot m([a, 1.01a))= \alpha\int_a^{1.01a}\frac{e^{-x}}{x}dx\nonumber\\
&\geq\alpha\int_a^{1.01a}\frac{e^{-x}}{1.01a}dx=\frac{\alpha e^{-a}\left(1-e^{-0.01a}\right)}{1.01a}\geq\frac{0.99\alpha e^{-a}}{a}\geq \sqrt{\alpha}.\label{hhh} 
\end{align}
In this case, a similar Poisson tail estimate as before yields
\begin{align}
&\P\left(0.99\alpha m([a, \infty))\leq \min_k\#\{s\leq \alpha: \Delta X^{(k)}_s\in [a, \infty)\}\leq \max_k\#\{s\leq \alpha: \Delta X^{(k)}_s\in [a, \infty)\}\leq 1.01\alpha m([a, \infty))\right)\nonumber\\
\stackrel{\eqref{hhh}}{\geq}&\ 1-2K e^{-c_1\sqrt{\alpha}}\label{ggg}.
\end{align}

Conditioning $M$ and $|T|$ on the intersection of the events in \eqref{002}, \eqref{re1} and \eqref{ggg}, we write
\begin{align}
N^\alpha = \sum_{s = (s_1, \cdots, s_K)\in\text{supp}(M)}\mathbf 1_{s\in T}\geq\sum_{s\in\mathcal S}\mathbf 1_{s\in T},\label{myNt}
\end{align}
where
\begin{align*}
\mathcal S = \left\{s = (s_1, \cdots, s_K)\in\text{supp}(M): \Delta X_{s_k}^{(k)}\geq a, \;\forall k \in [K]\right\}.
\end{align*}
Each term in the summand in the right-hand side of \eqref{myNt} is a Bernoulli random variable with parameter
\begin{align}
p_s:=1-\left(1-\frac{\prod_{k=1}^K\Delta X^{(k)}_{s_k}}{M([0,\alpha]^K)}\right)^{|T|}&\stackrel{\eqref{002}, \eqref{re1}}{\geq} 1-\left[1-\left(\frac{a}{1.01\alpha}\right)^K\right]^{0.99^{K+1}\alpha^K}\nonumber\\
& = 1-\left(1-\frac{2K\log\alpha}{0.99^{K+1}\alpha^K}\right)^{0.99^{K+1}\alpha^K}\geq 1- \alpha^{-2K}\label{lalaba},
\end{align}
where the last step used the fact that $(1-\frac{1}{x})^x\leq e^{-1}$ for $x>1$.
In particular, for $s\in\mathcal S$, 
\begin{align}
\P\left(\mathbf 1_{s\in T}=0\right) = 1- p_s\stackrel{\eqref{lalaba}}{\leq} \alpha^{-2K}.\label{zzzz}
\end{align}
Since
\begin{align}
\frac{0.95^K(\alpha e^{-1.03(2K)^{1/K}(\log \alpha)^{1/K}})^K}{2K\log\alpha}\stackrel{\eqref{zsa}}{\leq} 0.99^{2K}\left(\frac{\alpha e^{-a}}{a}\right)^K\stackrel{\eqref{hhh}, \eqref{ggg}}{\leq} |\mathcal S|\stackrel{\eqref{hhhh}, \eqref{ggg}}{\leq} 1.01^K \alpha^K,\label{zzz}
\end{align}
taking a union bound over $s\in \mathcal S$ yields that, with probability at least 
\begin{align*}
&1-2e^{-c_1(0.99\alpha )^K}-2Ke^{-c_2\alpha}-2Ke^{-c_3\alpha}-2Ke^{-c_1\sqrt{\alpha}}-\sum_{s\in\mathcal S}(1-p_s)\\
\stackrel{\eqref{zzzz}, \eqref{zzz}}{\geq}&\ 1-2e^{-c_1(0.99\alpha )^K}-2Ke^{-c_2\alpha}-2Ke^{-c_3\alpha}-2Ke^{-c_1\sqrt{\alpha}}-1.01^K\alpha^{-K},
\end{align*}
the following holds: 
\begin{align}
N^\alpha\geq |\mathcal S|\geq \frac{0.95^K(\alpha e^{-1.03(2K)^{1/K}(\log \alpha)^{1/K}})^K}{2K\log\alpha}.\label{re3}
\end{align}

\textbf{Analysis of $D_k^\alpha$.}
Analysis of $D_k^\alpha$ is easy owing to an observation in \cite{tillinghast2021nonparametric}: For $k\in [K]$, conditioned on $W_\ell^\alpha$, $\ell\neq k$, 
\begin{align*}
&D_k^\alpha \sim \text{Poisson}\left(\alpha\psi\left(\gamma_\alpha^{(-k)}\right)\right),
\end{align*}
where 
\begin{align*}
&\gamma_\alpha^{(-k)} = \prod_{\ell\neq k}W_k^{\alpha}([0,\alpha])&\psi(\alpha) = \int_\R (1-e^{-\alpha x})m(dx). 
\end{align*}
By a similar Poisson tail estimate as before,
\begin{align}
\P\left(0.99\alpha \psi(\gamma_\alpha^{(-k)})\leq D_k^\alpha\leq 1.01\alpha \psi(\gamma_\alpha^{(-k)})\right)\geq 1-2e^{-c_1\alpha\psi(\gamma_\alpha^{(-k)})}. \label{poil}
\end{align}
It is easy to check via L'H\^opital's rule that
\begin{align*}
\lim_{\alpha\to\infty}\frac{\psi(\alpha)}{\log \alpha} = \lim_{\alpha\to\infty}\alpha\frac{d}{d\alpha}\psi(\alpha) = \lim_{\alpha\to\infty}\alpha\int_\R xe^{-\alpha x}(1-e^{-\alpha x})m(dx) =  \lim_{\alpha\to\infty}\frac{\alpha^2}{(\alpha+1)(2\alpha+1)} = \frac{1}{2}.
\end{align*}
Hence, for all sufficiently large $\alpha$, 
\begin{align}
0.49\log \alpha\leq\psi(\alpha)\leq 0.51\log \alpha.\label{asymp}
\end{align} 
Thus, conditioned on the event $0.99 \alpha\leq \min_{\ell\neq k}W_\ell^\alpha([0,\alpha])\leq\max_{\ell\neq k} W^\alpha_\ell([0,\alpha])\leq 1.01 \alpha$ (which holds with probability at least as the lower bound in \eqref{002}), \eqref{poil} and \eqref{asymp} together implies that,  for all sufficiently large $\alpha$, 
\begin{align*}
\P\left(0.48(K-1)\alpha\log\left(0.99\alpha \right)\leq D_k^\alpha\leq 0.52(K-1)\alpha\log\left(1.01\alpha \right)\right)\geq 1-2e^{-c_1(0.99\alpha )^K}.
\end{align*}
Taking a union bound over $k$ yields that, for all sufficiently large $\alpha$, with probability at least $1-2Ke^{-c_1(0.99\alpha )^K}$, 
\begin{align}
0.48(K-1)\alpha\log\left(0.99\alpha \right)\leq \min_k D_k^\alpha\leq \max_kD_k^\alpha\leq 0.52(K-1)\alpha\log\left(1.01\alpha \right).\label{re2}
\end{align}
Combining \eqref{re1}, \eqref{re3}, \eqref{re2} and renaming the constants yields the desired result.
\end{proof}



\end{document}